\titleformat{\section}{\normalfont\bfseries\large\raggedright}{\thesection.\ }{0.10cm}{}
\titlespacing*{\section}{0pt}{0.24in plus .8ex}{0.10in plus .2ex}
\titleformat{\subsection}{\normalfont\bfseries\normalsize\raggedright}{\thesubsection.\ }{0.08cm}{}
\titlespacing*{\subsection}{0pt}{0.2in plus .8ex}{0.08in plus .2ex}
\newcommand{\reals}{\mathbb{R}}
\newcommand{\X}{\mathbb{X}}
\newcommand{\Y}{\mathbb{Y}}
\newcommand{\Z}{\mathbb{Z}}
\renewcommand{\P}{\mathbb{P}}
\newcommand{\x}{\textbf{x}}
\newcommand{\y}{\textbf{y}}
\newcommand{\z}{\textbf{z}}
\newcommand{\p}{\textbf{p}}
\newcommand{\h}{\textbf{h}}
\newcommand{\0}{\textbf{0}}
\renewcommand{\bar}{\overline}
\newcommand{\Trans}{\intercal}
\DeclareMathOperator*{\relintnp}{relint}
\newcommand{\relint}[1]{\operatorname{\relintnp}(#1)}
\DeclareMathOperator*{\argmax}{arg\,max}
\newcommand\restr[2]{{
  \left.\kern-\nulldelimiterspace 
  #1 
  \vphantom{\big|} 
  \right|_{#2} 
  }}
\newtheorem{definition}{Definition}
\newtheorem{theorem}{Theorem}
\title{Learning in Matrix Games can be Arbitrarily Complex}
\author{Gabriel P. Andrade\textsuperscript{\rm 1} \and Rafael Frongillo\textsuperscript{\rm 1} \and Georgios Piliouras\textsuperscript{\rm 2}\\}
\date{\textsuperscript{\rm 1} Department of Computer Science\\
University of Colorado Boulder\\
\{gabriel.andrade~;~raf\}@colorado.edu\\

\vskip8pt

\textsuperscript{\rm 2} Engineering Systems and Design\\
Singapore University of Technology and Design\\
georgios@sutd.edu.sg}
\begin{document}

\maketitle

\begin{abstract}
A growing number of machine learning architectures, such as Generative Adversarial Networks, rely on the design of games which implement a desired functionality via a Nash equilibrium. 
In practice these games have an implicit complexity (e.g.~from underlying datasets and the deep networks used) that makes directly computing a Nash equilibrium impractical or impossible.
For this reason, numerous learning algorithms have been developed with the goal of iteratively converging to a Nash equilibrium.
Unfortunately, the dynamics generated by the learning process can be very intricate and instances of training failure hard to interpret.
In this paper we show that, in a strong sense, this dynamic complexity is inherent to games.
Specifically, we prove that replicator dynamics, the continuous-time analogue of Multiplicative Weights Update,
even when applied in a very restricted class of games---known as finite matrix games---is rich enough to be able to approximate arbitrary dynamical systems.
Our results are positive in the sense that they show the nearly boundless dynamic modelling capabilities of current machine learning practices, but also negative in implying that these capabilities may come at the cost of interpretability. As a concrete example, we show how replicator dynamics can effectively reproduce the well-known strange attractor of Lonrenz dynamics (the ``butterfly effect",  Fig~\ref{fig:lorenz}) while achieving no regret.
\end{abstract}

\hspace{5pt}

\begin{figure}[!ht]
  \centering
  \includegraphics[scale = .12]{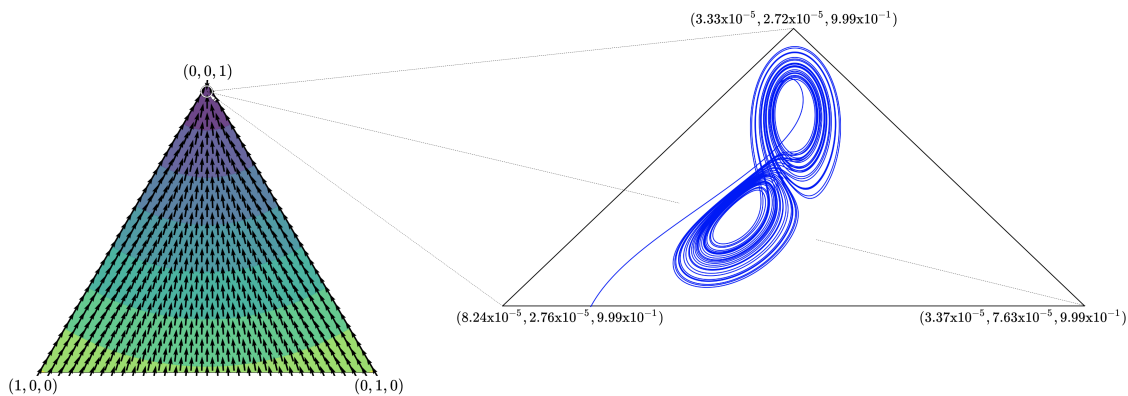}
  \caption{A no-regret strange attractor.
  The Lorenz system can be embedded in replicator dynamics on a finite matrix game.
  For more details see Sections~\S\ref{sec:lorenz} and Appendix~\ref{append:Lorenz}.}
  \label{fig:lorenz}
\end{figure}

\section{Introduction}\label{sec:intro}
At the heart of most machine learning architectures with multi-agent components lies a game played by optimization modules cooperating and competing to minimize an individual loss function.
These settings are ubiquitous; they arise explicitly from the goal of machine learning tasks~(e.g.~chess, poker, Go) or implicitly in the design of architectures such as Generative Adversarial Networks~\citep{goodfellow2014generative,balduzzi2020smooth}.
Game theory has thus emerged as a powerful formalism for studying the broad class of problems under the umbrella of multi-agent machine learning.
In particular, rather than the classical algorithms which simply minimize a single loss function, problems with multiple loss functions require algorithms which
converge to a Nash equilibrium.

Unfortunately, we lack learning algorithms that provably find such equilibria in general games.  
In the presence of multiple interacting loss functions, the standard toolbox of learning algorithms often fails in unpredictable ways.
Recent work has shown that, even under the simplifying assumption of perfect competition (zero-sum games and variants), instead of converging to Nash equilibria the dynamics of standard learning algorithms can cycle~\citep{mertikopoulos2018cycles}, diverge~\citep{BaileyEC18}, or even be formally chaotic~\citep{cheung2019vortices}.
Moreover, when one broadens their scope to a more general class of games, experimental results suggest that chaos is in fact typical behaviour~\citep{sanders2018prevalence} and can even emerge in low-dimensional systems~\citep{palaiopanos2017multiplicative}. 
Considering the ubiquity of multi-agent ML settings alongside these negative results, and others discussed in~\S\ref{subsec:rel_works}, an urgent question arises: \textit{Is there any hope for a general understanding of the behaviours arising from optimization-driven dynamics in games?}

This paper provides evidence that the answer to this question is almost certainly ``no''.
We show that the dynamics of even, arguably, the most well-studied evolutionary learning algorithms, even in a simple and seemingly very constrained class of games, can approximate arbitrarily complex dynamical systems.
\vspace{.2cm}\\
{\bf Informal Main Theorem.}
\textit{Replicator learning dynamics on a matrix game can approximate essentially \emph{any} $C^1$ dynamical system with arbitrary precision.}
\vspace{.2cm}

The significance of our result is clear when one considers that matrix games are a very restricted class of games and that replicator dynamics is a special case of Follow-the-Regularized-Leader (FTRL) dynamics, which captures multiple popular learning algorithms such as gradient descent, hedge, multiplicative weights, etc.~as a special case and enjoys vanishing regret at a rate of $O(1/T)$ (see e.g.~\cite{mertikopoulos2018cycles} and the citations within).
Since matrix games are a simple class of games and replicator dynamics is a special case of FTRL, then the dynamics of more general games and learning algorithms cannot be any simpler than the case we have shown to be arbitrarily complex; framed like this, our result can be interpreted in the same fashion as a reduction in computational complexity theory.
When understood in this way, our result implies that understanding learning dynamics in multi-agent machine learning settings is akin to a general understanding of dynamical systems and has multi-faceted implications depending on the context it is being interpreted in.
We discuss a range of such implications in~\S\ref{sec:disc}, including what our result implies for designing learning algorithms and the use of regret for measuring learning performance.

A formal statement of the main theorem requires carefully exploring strong notions of equivalence between dynamical systems, and how these notions can be used to meaningfully define approximations of dynamical systems.
All of the requisite language and formalism is introduced in~\S\ref{sec:prelims}, while our notion of approximation and the main result is given in~\S\ref{sec:universal_learning_dynamics}.
Our proof establishes connections between game theory, topological dynamics, learning theory, and standard population models from mathematical ecology. 
In fact, the question explored in this paper mirrors one famously asked nearly fifty years ago by~\cite{smale1976differential}---about whether it is possible to meaningfully understand and predict the behaviours of well-studied ecological models of competition. 
The class of systems considered in this paper are, relatively speaking, even more restricted than the ones considered by Smale in his construction.
However, as we show through a sequence of transformations and embeddings, we can approximately capture the behaviour of any target system using replicator dynamics in finite dimensional matrix games.

\subsection{Related Work}\label{subsec:rel_works}
Optimization-driven learning in games, e.g., regret-minimizing dynamics, has been the subject of intense study.
The standard approach focuses on their time-averaged behaviour and its convergence to coarse correlated equilibria in games, (see e.g.~\cite{roughgarden2015intrinsic,stoltz2007learning}).
The analysis of the time-averaged behaviour, however, is unable to faithfully capture the day-to-day dynamics.
In many cases, it has been shown that the emergent day-to-day behaviour is non-convergent in a strong formal sense~\citep{mertikopoulos2018cycles,bailey2019multi}.
Perhaps even more alarming is the fact that strong time-average convergence guarantees may hold true regardless of whether the underlying system is convergent, recurrent, or even chaotic~\citep{palaiopanos2017multiplicative,CFMP2019,Thip18,cheung2019vortices,cheung2020chaos,bailey2020finite}. In fact, all FTRL dynamics, despite their optimal regret guarantees, fail to achieve (even local) asymptotic stability on \textit{any} (even partially) mixed Nash equilibrium in effectively \textit{all} games \citep{flokas2020no}.

With the proliferation of multi-agent architectures in machine learning, e.g., Generative Adversarial Networks (GANs), recent work has placed particular attention on the modes of failure arising in variants of zero-sum competition between learning agents (e.g.~between two neural networks).
In zero-sum games the dynamics of standard learning algorithms such as gradient descent do not converge to Nash equilibria.
Instead, the resultant dynamics may lead to cycling~\citep{mertikopoulos2018cycles,vlatakis2019poincare,boone2019darwin,balduzzi2018mechanics}, divergence~\citep{BaileyEC18,cheung2018multiplicative}, or formally chaotic behaviours~\citep{cheung2019vortices,cheung2020chaos}.
In the face of such strong negative results for out-of-the-box optimization methods the development of tailored algorithmic solutions is incentivized, e.g.~\cite{daskalakis2018training,mertikopoulos2019optimistic,gidel2019a,mescheder2018training,perolat2020poincar,yazici2018unusual}.
However, even when these algorithms do equilibrate, they may stabilize at fixed points that are not Nash equilibria and thus not game theoretically meaningful~\citep{adolphs2018local,daskalakis2018limit}. 

Alongside studies of learning in zero-sum games, differential games (i.e.~smooth games) have been the focus of recent research as a powerful, and more general, model of multi-agent machine learning (e.g.~\cite{balduzzi2018mechanics,mazumdar2020gradient}).
\cite{letcher2019differentiable} leveraged connections with Hamiltonian dynamics to design new algorithms for training GANs while ``correcting'' cyclic behaviours.
In addition, \cite{balduzzi2020smooth} explored the structure of differential games and revealed promising training guarantees when relatively weak constraints are placed on the loss functions of agents in the model and the payoff structure of their interactions.
Within the space of differential games, the dynamics of non-convex non-concave games have received particular attention and a number of distinct non-equilibrating failure modes have been catalogued~\citep{vlatakis2019poincare,hsieh2020limits}. 
The impossibility of universal algorithmic solutions within the broad scope of differential games has also been reinforced by recent work that constructs a simple example where reasonable gradient-based methods cannot hope to converge~\citep{letcher2021impossibility}.

Even when one restricts their attention on matrix games, the difficulty of learning Nash equilibria grows significantly and swiftly when one broadens their scope to a more general class of games than just zero-sum games~\citep{daskalakis2010learning,paperics11,galla2013complex,papadimitriou2019game}. 
In fact, detailed experimental studies suggest that chaos is standard fare~\citep{sanders2018prevalence} and emerges even in very low dimensional systems~\citep{sato2002chaos,palaiopanos2017multiplicative, 2017arXiv170109043P}. 
This abundance of non-equilibrating results has inspired a program for linking game theory to topology of dynamical systems \citep{Entropy18,papadimitriou2019game}, specifically to Conley's fundamental theorem of dynamical systems \citep{conley1978isolated}.
This approach shifts attention from Nash equilibria to a more general notion of recurrence, called chain recurrence, that is flexible enough to capture both cycling behavior as well as chaos. These tools have since found application in multi-agent ML settings \citep{omidshafiei2019alpha,rowland2019multiagent}. 

Thus far, almost all work on learning dynamics in games can be roughly broken into two streams: (i) designing algorithms that converge to desirable states, and (ii) characterizing the possible emergent behaviors from a given class of game dynamics.
Our work differs from both these lines of inquiry by, in a sense, doing the \emph{converse} of (ii).
Roughly speaking, we ask the question ``Given a target dynamical system, can we construct a game whose learning dynamics behave in a similar fashion?''
To the best of our knowledge, our work is the first construction of this sort in the context of learning in games.
Our approach is inspired by the work of~\cite{smale1976differential} and~\cite{hirsch1988systems} in mathematical ecology, which have developed constructions in the same spirit as ours to study the  dynamics of population models.

\section{Preliminaries}\label{sec:prelims}
\subsection{Game Theory}\label{subsec:game_theory}
A \emph{matrix game} (\emph{finite $2$-player normal form game}) is defined on a set of two agents $[2] = \{1,2\}$.
Agent $i \in [2]$ chooses actions from a finite action set $S_i$ according to a distribution $\x_i$ in the probability $|S_i|$-simplex $\Delta^{|S_i|} = \{\x_i \in \reals^{|S_i|}_+ : \sum_{s \in S_i} x_{i s} = 1\}$.
The probability distribution $\x_i$ is known as $i$'s \emph{mixed strategy}.
As the name indicates, agents in a matrix game receive payoffs according to a payoff matrix $A_{i,j} \in \reals^{|S_i| \times |S_j|}$ where $i,j \in [2]$ and $i \neq j$.
Given that mixed strategies $\x_1 \in \Delta^{|S_1|}$ and $\x_2 \in \Delta^{|S_2|}$ are chosen, agent $1$ receives payoff $\x^\Trans_1 A_{1,2} \x_2$ and agent $2$ receives payoff $\x^\Trans_2 A_{2,1} \x_1$.
This gives rise to two optimization problems, one per agent, where agents act strategically and independantly to maximize their expected payoff over the other agent's mixed strategy, i.e.
\begin{equation}\label{eq:game_opt}
    \max\limits_{\x_i \in \Delta^{|S_i|}} \x_i^\Trans A_{i,j} \x_j, \qquad i,j \in [2]~\&~i \neq j~.
\end{equation}

\subsection{Follow-the-Regularized-Leader (FTRL) Learning and Replicator Dynamics}\label{subsec:learning}

Arguably the most well known class of algorithms for online learning and optimization is Follow-the-Regularized-Leader (FTRL). 
Given initial payoff vector $\y_i(0)$, an agent $i$ that plays against agent $j$ in a matrix game $A_{i,j}$
updates their strategy at time $t$ according to
\begin{equation}
\label{eqn:FTRL}
\begin{aligned}
\y_i(t)&= \y_{i}(0)+\int_0^t A_{i,j}\x_{j}(s) ds \\
\x_i(t)&= \argmax_{\x_i\in \Delta^{|S_i|}} \{\langle \x_i, \y_i(t)\rangle-h_i(\x_i)\} 
\end{aligned}
\end{equation}
where $h_i$ is strongly convex and continuously differentiable. FTRL effectively performs a balancing act between exploration and exploitation.
The accumulated payoff vector $\y_i(t)$ indicates the total payouts until time $t$, i.e.~if agent $i$ had played strategy $s_i \in S_i$ continuously from $t=0$ until time $t$, agent $i$ would receive a total reward of $\y_{is_i}(t)$. 
The two most well-known instantiations of FTRL dynamics are the online gradient descent algorithm when $h_i(\x_i)=||\x_i||_2^2$,
and the replicator dynamics (the continuous-time analogue of Multiplicative Weights Update~\citep{Arora05themultiplicative}) when $h_i(\x_i)=\sum_{s_i\in {S}_i} \x_{is_i}\ln \x_{is_i}$. 
FTRL dynamics in continuous time has bounded regret in arbitrary games~\citep{mertikopoulos2018cycles}. 
For more information on FTRL dynamics and online optimization, see \cite{Shalev2012}.

In this paper, we focus on \emph{replicator dynamics} (RD) as our main game dynamics. 
Aside from its role in optimization, RD is one of the key mathematical models of evolution and biological competition~\citep{Schuster1983533,taylor1978evolution}. 
It is also the prototypical dynamic studied in the field of evolutionary game theory~\citep{Weibull,Sandholm10}. 
In this context, replicator dynamics can be thought of as a normalized form of the population models introduced in~\S\ref{subsec:population_models}, and is studied given just a single payoff matrix $A$ and a single probability distribution $\x$ that can be thought abstractly as capturing the proportions of different species/strategies in the current population. 
Species/strategies get randomly paired up and the resulting payoff determines which strategies will increase/decrease over time.

Formally, the dynamics are as follows.
Let $A \in \reals^{m \times m}$ be a matrix game and $\x \in \Delta^m$ be the mixed strategy played.
RD on $A$ are given by:
\begin{equation} \label{eq:replicator}
    \dot{x}_i = \frac{d x_i}{d t} = x_i \left((A\x)_i - \x^\Trans A \x \right), \qquad i \in [n]
\end{equation}
Under the symmetry of $A_{i,j}=A_{j,i}$,
and of initial conditions (i.e. $\x_i=\x_j$),
it is immediate to see that under  
the $\x_i, \x_j$ solutions of  (\ref{eqn:FTRL}) are identical to each  other and to the solution of (\ref{eq:replicator}) with $A=A_{i,j}=A_{j,i}$. 
For our purposes, it will suffice to focus on exactly this setting of matrix games defined by a single payoff matrix $A$ and a single probability distribution $\x$, which is actually the standard setting within evolutionary game theory.

\subsection{Dynamical Systems Theory}\label{subsec:dynamics_theory}
Dynamical systems are mathematical models of time-evolving processes. 
The object undergoing change in a dynamical system is called its \emph{state} and is often denoted by $\x \in \X$, where $\X$ is a topological space called a \emph{state space}. 
We will be focusing on \emph{continuous time} systems with time denoted by $t \in \reals$.
Change between states in a dynamical system is described by a \emph{flow} $\Phi: \X \times \reals \to \X$ satisfying two properties:
\begin{enumerate}[label=(\roman*)]
    \item For each $t \in \reals$, $\Phi(\cdot,t): \X \to \X$ is bijective, continuous, and has a continuous inverse.
    \vspace*{-2pt}
    \item For every $s,t \in \reals$ and $\x \in \X$, $\Phi(\x,s+t) = \Phi(\Phi(\x,t),s)$.
\end{enumerate}
Intuitively, flows serve the purpose of describing the evolution of states in the dynamical system.
Given a time $t \in \reals$, the flow describes the relative movement of every point $\x \in \X$; we will denote this by the map $\Phi^t : \X \to \X$.
Similarly, given a point $\x \in \X$, the flow captures the trajectory of $\x$ as a function of time; in an abuse of notation, we will denote this by $\Phi^t(\x)$ where $t$ is changing.

When $\x$ changes according to a continuous function in $t$ the dynamical system is often given as a system of \emph{ordinary differential equations} (ODEs).
Systems of ODEs describe a \emph{vector field} $V:\X \to T\X$ which assigns to each $\x \in \X$ a vector in the tangent space of $\X$ at $\x$.
This fact is particularly important in this paper for the case that $\X$ is $\Delta^n$, in which case the tangent space $T\Delta^n$ at each $\x \in \Delta^n$ is: $\{\y \in \reals^n : \|\y\|_1 = 0\}$ for $\x$ in the interior of $\Delta^n$, and additionally ``pointing inwards'' for $\x$ on the boundary of $\Delta^n$ (i.e.~$y_i \geq 0$ if $x_i = 0$). 
A system of ODEs is said to \emph{generate} (resp.~\emph{give}) a flow $\Phi$ if $\Phi$ describes a solution of the ODEs at each point $\x \in \X$.
Throughout this paper we will assume that all dynamical systems discussed can be given by a system of ODEs.
As such, we will use the term \emph{dynamical system} to refer to the system of ODEs, the associated vector field, and a generated flow interchangeably.
Note that, for Lipschitz-continuous systems of ODEs, the generated flow is unique (see~\cite{perko1991differentialeqsbook,meiss2007differentialbook}) and using these terms interchangeably is well defined.

An important notion in this paper, and dynamical systems theory in general, is that of a \emph{global attracting set} of the dynamical system.
Let $\Phi$ be a flow generated by some dynamical system on $\X$.
We say $\Y \subset \X$ is \emph{forward invariant} for the flow $\Phi$ if $\Phi^t(\y) \in \Y$ for every $t \geq 0$, $\y \in \Y$.
We say $\Y \subset \X$ is \emph{globally attracting} for the flow $\Phi$ if $\Y$ is nonempty, forward invariant, and
\begin{equation}
   \Y \supseteq \bigcap\limits_{t > 0} \{\Phi^t(\x) : \x \in \X\}~.
\end{equation}
Intuitively speaking, if $\Y$ is globally attracting it will capture the dynamics of $\Phi$ starting from any point in $\X$ after some transitionary period of time.
In~\S\ref{sec:universal_learning_dynamics} we also use the notion of \emph{stationary} dynamics, which is often considered ``uninteresting'' in dynamical systems theory since, in a sense, it describes dynamical systems that are not dynamic.
For our purposes, we say a dynamical system is stationary if the ODEs of that system are identically zero, i.e.~the ODEs describe a system whose solutions are stuck in their initial state. 

Now let $\X$ and $\Y$ be two topological spaces.
We say that a function $f: \X \to \Y$ is a \emph{homeomorphism} if (i) $f$ is bijective, (ii) $f$ is continuous, and (iii) $f$ has a continuous inverse.
Furthermore, two flows $\Phi: \X \times \reals \to \X$ and $\Psi: \Y \times \reals \to \Y$ are \emph{homeomorphic} if there exists a homeomorphism $g: \X \to \Y$ such that for each $\x \in \X$ and $t \in \reals$ we have $g(\Phi(\x,t)) = \Psi(g(\x),t)$.
If additionally $g$ is $C^1$ and has a $C^1$ inverse, then we say $g$ is a \emph{diffeomorphism} and that the flows $\Phi$ and $\Psi$ are \emph{diffeomorphic}.
Note that every diffeomorphism is also a homeomorphism, and thus every pair of diffeomorphic flows are also homeomorphic.
Homeomorphisms (resp.~diffeomorphisms) are a strong, and typical, notion of equivalence between dynamical systems.
In essence, two dynamical systems are homeomorphic if their trajectories can be mapped to one another by smoothly stretching and folding space.

\subsection{Ecological Population Models}\label{subsec:population_models}
Throughout this paper we make use of tools developed in mathematical ecology for studying the growth and decline of populations of species.
As is typically done in ecological models, consider vectors $\x \in \reals^n_+$ where $n$ is the number of ``species'' and $x_i$ represents the population of the $i^{th}$ space.
Suppose that the dynamics of each population is given by the system of ODEs
\begin{equation} \label{eq:competition}
    \dot{x}_i = \frac{d x_i}{d t} = x_i M_i(\x), \qquad i \in [n]~.
\end{equation}
We call any dynamical systems given by eq.~\ref{eq:competition} a \emph{population system}.
Furthermore, for each $i \in [n]$, $M_i$ is called the $i^{th}$ species' \emph{fitness function}.

Two well studied special cases of population systems will be particularly relevant to our analysis: (i) when the fitness functions are affine and (ii) when the fitness functions are multivariate generalized polynomials.
In case (i)---when the fitness function $M_i$ is affine for every $i \in [n]$---the system of ODEs is known as the \emph{Lotka-Volterra} (LV) equations and is given by
\begin{equation} \label{eq:LV}
    \dot{x}_i = \frac{d x_i}{d t} = x_i \left(\hat{\lambda}_i +  \sum_{j \in [n]} \hat{A}_{ij} x_j \right), \qquad i \in [n]
\end{equation}
where $\hat{\lambda} \in \reals^n$ and $\hat{A} \in \reals^{n \times n}$.
In case (ii)---when the fitness function $M_i$ is a multivariate \emph{generalized polynomial} for every $i \in [n]$---the system of ODEs is known as the \emph{generalized Lotka-Volterra} (GLV) equations and is given by
\begin{equation} \label{eq:general_LV}
    \dot{x}_i = \frac{d x_i}{d t} = x_i \left(\lambda_i +  \sum_{j \in [m]} A_{ij} \prod_{k \in [n]} x^{B_{jk}}_k \right), \qquad i \in [n]
\end{equation}
where $m$ is some positive integer, $\lambda \in \reals^n$, $A \in \reals^{n \times m}$, and $B \in \reals^{m \times n}$.

\section{Main Result: Universality of Replicator Dynamics in Matrix Games}\label{sec:universal_learning_dynamics}
In this section we formally state and prove our main result.
Specifically, we show that replicator dynamics in finite matrix games can emulate the behaviour of \emph{any} finite dimensional $C^1$ dynamical system defined on a space diffeomorphic to the probability simplex.
In order to state our result, we introduce a notion of approximately embedding one dynamical system into another.
\begin{definition}\label{def:approx_embedding}
A flow $\Psi$ on topological space $\X$ is \emph{$(\epsilon,T)$-approximately embedded} in a flow $\Theta$ on topological space $\Z$ if there exists $\Z' \subseteq \Z$ and topological space $\Y$ satisfying the following: 
\begin{enumerate}[label=(\roman*)]
    \item The diameter of $\Y$ is $1$ with respect to $\|\cdot\|_{\infty}$, i.e.~$\sup\{\|\y - \hat{\y}\|_{\infty} : \y,\hat{\y} \in \Y\} = 1$.
    \item There exists diffeomorphisms $g:\X \to \Y$ and $f:\relint{\Y} \to \Z'$.
    \item For every $\y \in \relint{\Y}$ and $t \in [0,T]$ we have 
    \begin{equation*}
        \|g(\Psi^t(g^{-1}(\y))) - f^{-1}(\Theta^t(f(\y)))\|_{\infty} < \epsilon~.
    \end{equation*}
\end{enumerate}
\end{definition}
This definition can be seen as an extension of embeddings traditionally studied in differential topology; in fact, the function $f$ is an embedding of $\relint{\Y}$ into $\Z$ in the traditional sense.
Intuitively, a flow $\Psi$ is said to be $(\epsilon,T)$-approximately embedded in a flow $\Theta$ if, on some subspace $\Z'$, $\Theta$ stays within $\epsilon$ from a diffeomorphic copy of $\Psi$ for at least $T$ time.
Definition~\ref{def:approx_embedding} stipulates that the approximation is for every $\y \in \relint{\Y}$, instead of on the entire space $\Y$.
The importance of this distinction lies in the fact that the flow $\Theta$ restricted to $\Z'$ should be diffeomorphic to a flow that is well defined in $\Y$, but the boundary of $\Y$ may not be well defined in the embedding space.

With this definition, we can state our main result.
For expository purposes we state Theorem~\ref{thm:main} in terms of the convex hull of $n+1$ affinely independent points in $\reals^n$, since this captures most settings of interest to machine learning practitioners.
Theorem~\ref{thm:main} trivially extends to \emph{any} space diffeomorphic to the simplex since diffeomorphisms are closed under composition.

\begin{figure}[t]
\centering

\definecolor{c1}{RGB}{255, 204, 153}
\definecolor{c2}{RGB}{170, 255, 255}
\definecolor{c3}{RGB}{255, 153, 153}

\tikzstyle{flow} = [rectangle,draw=black,fill=white,thick,inner sep=0pt,minimum size=.7cm,font=\bfseries]
\tikzstyle{space} = [rectangle,draw=black,fill=white,thick,inner sep=0pt,minimum height=9mm, minimum width=1.8cm,font=\small]

\tikzstyle{arrow} = [-,line width=.5mm]
\tikzstyle{sparrow} = [<-,line width=.25mm]
\tikzstyle{relarrow} = [-,dashed,line width=.25mm]
\tikzstyle{textarrow} = [-,line width=.25mm]

\tikzstyle{thm1} = [fill,opacity=.5,fill opacity=.5,line cap=round, line join=round, line width=45pt]
\tikzstyle{thm2} = [fill,opacity=.5,fill opacity=.5,line cap=round, line join=round, line width=30pt]

\makebox[0pt]{
\begin{tikzpicture}
\node[flow] (prescribed) {$\Psi$};
\node[flow] (diffeo) [right = 2.2cm of prescribed] {$\Phi$}
    edge [-,line width=.5mm] node[auto,swap] {{\scriptsize \text{Diffeomorphism}}} (prescribed);
\node[flow] (approx) [right = 2.2cm of diffeo] {$\hat{\Phi}$}
    edge [-,line width=.5mm] node[auto,swap] {{\scriptsize $\epsilon \text{-Approximation}$}} (diffeo);
\node[flow] (glv) [right = 2.2cm of approx] {$\Gamma$}
    edge [-,line width=.5mm] node[auto,swap] {{\scriptsize \text{Embed}}} (approx);
\node[flow] (padglv) [right = 2.2cm of glv] {$\tilde{\Gamma}$}
    edge [-,line width=.5mm] node[auto,swap] {{\scriptsize \text{Embed}}} (glv);
\node[flow] (lv) [right = 2.2cm of padglv]  {$\hat{\Gamma}$}
    edge [-,line width=.5mm] node[auto,swap] {{\scriptsize \text{Diffeomorphism}}} (padglv);
\node[flow] (replicator) [right = 2.2cm of lv]  {$\Theta$}
    edge [-,line width=.5mm] node[auto,swap] {{\scriptsize \text{Diffeomorphism}}} (lv);

\node[space] (original) [below = .85cm of prescribed] {$\X$}
    edge [relarrow] (prescribed);
\node[space] (simplex) [below = .85cm of diffeo] {$\Y = \Delta^n$}
    edge [relarrow] (diffeo)
    edge [<-,line width=.25mm,bend left,out=-5,in=185] node[auto,swap] {{\scriptsize $g$}} (original)
    edge [->,line width=.25mm,bend left,out=5,in=175] node[below left=0.01cm and -.5cm of simplex] {{\scriptsize $g^{-1}$}} (original);
\node[space] (intsimplex) [below = .85cm of approx] {$\relint{\Delta^n}$}
    edge [relarrow] (approx)
    edge [<-,line width=.25mm] node[auto,swap] {{\scriptsize \text{Restrict}}} (simplex);
\node[space] (glvreals) [below = .85cm of glv] {$\reals^n_{++}$}
    edge [relarrow] (glv)
    edge [<-,line width=.25mm] node[auto,swap] {{\scriptsize $\text{id}$}} (intsimplex);
\node[space] (padglvreals) [below = .85cm of padglv] {$\reals^{m-1}_{++}$}
    edge [relarrow] (padglv)
    edge [<-,line width=.25mm] node[auto,swap] {{\scriptsize $f_1$}} (glvreals);
\node[space] (lvreals) [below = .85cm of lv] {$\reals^{m-1}_{++}$}
    edge [relarrow] (lv)
    edge [<-,line width=.25mm] node[auto,swap] {{\scriptsize $f_2$}} (padglvreals);
\node[rectangle,draw=black,fill=white,thick,inner sep=0pt,minimum height=9mm, minimum width=2.6cm,font=\small] (repsimplex) [below = .85cm of replicator] {$\Z = \relint{\Delta^m}$}
    edge [relarrow] (replicator)
    edge [<-,line width=.25mm] node[auto,swap] {{\scriptsize $f_3$}} (lvreals)
    edge [<-,line width=.25mm,bend right,out=-17,in=200] node[above left=0.01cm and -1.25cm of glvreals,swap] {{\tiny $f = f_3 \circ f_2 \circ f_1$}} (glvreals);
    
\node[above=.08cm of prescribed,text width=1.8cm,text centered, font=\scriptsize] {Prescribed Dynamical System}
    edge [textarrow] (prescribed);
\node[above=.08cm of glv,text width=1.8cm,text centered, font=\scriptsize] {GLV System}
    edge [textarrow] (glv);
\node[above=.08cm of padglv,text width=1.8cm,text centered, font=\scriptsize] {GLV System}
    edge [textarrow] (padglv);
\node[above=.08cm of lv,text width=1.8cm,text centered, font=\scriptsize] {LV System}
    edge [textarrow] (lv);
\node[above=.08cm of replicator,text width=1.8cm,text centered, font=\scriptsize] {Replicator Dynamics}
    edge [textarrow] (replicator);

\node[above=2.05cm of glv,text centered, font=\Large] {\textbf{Theorem~\ref{thm:main}}};
\node[above=1.08cm of approx,text centered, font=\large] {\textbf{Theorem~\ref{thm:GLV_approx_of_general_sys}}};
\node[above right=1.08cm and -2.9cm of lv,text centered, font=\large] {\textbf{Theorem~\ref{thm:GLV_to_Replicator}}};
\node[below left = -.18cm and -.48cm of original] (t1bl) {};
\node[below right = -.18cm and -.48cm of repsimplex] (t1br) {};
\node[above = 4.15cm of t1bl] (t1tl) {};
\node[above = 4.15cm of t1br] (t1tr) {};
\node[below left = -.18cm and -.4cm of simplex] (t2bl) {};
\node[below right = -.18cm and -.5cm of glvreals] (t2br) {};
\node[above = 3.5cm of t2bl] (t2tl) {};
\node[above = 3.5cm of t2br] (t2tr) {};
\node[below left = -.18cm and -.55cm of glvreals] (t3bl) {};
\node[below right = -.18cm and -.5cm of repsimplex] (t3br) {};
\node[above = 3.5cm of t3bl] (t3tl) {};
\node[above = 3.5cm of t3br] (t3tr) {};

\begin{pgfonlayer}{background}
\begin{scope}[transparency group,opacity=.5]
\draw[thm1,opacity=1,color=c1] (t1bl) -- (t1br) -- (t1tr) -- (t1tl) -- (t1bl);
\fill[thm1,opacity=1,color=c1] (t1bl.center) -- (t1br.center) -- (t1tr.center) -- (t1tl.center) -- (t1bl.center);
\end{scope}

\begin{scope}[transparency group,opacity=.5]
\draw[thm2,opacity=1,color=c2] (t2bl) -- (t2br) -- (t2tr) -- (t2tl) -- (t2bl);
\fill[thm2,opacity=1,color=c2] (t2bl.center) -- (t2br.center) -- (t2tr.center) -- (t2tl.center) -- (t2bl.center);
\end{scope}

\begin{scope}[transparency group,opacity=.5]
\draw[thm2,opacity=1,color=c3] (t3bl) -- (t3br) -- (t3tr) -- (t3tl) -- (t3bl);
\fill[thm2,opacity=1,color=c3] (t3bl.center) -- (t3br.center) -- (t3tr.center) -- (t3tl.center) -- (t3bl.center);
\end{scope}
\end{pgfonlayer}

\end{tikzpicture}
}

\caption{A diagram highlighting the relationship between Theorems~\ref{thm:main},~\ref{thm:GLV_approx_of_general_sys},~and~\ref{thm:GLV_to_Replicator}, along with the steps used to construct the $(\epsilon,T)$-approximate embedding of $\Psi$ in $\Theta$.
All embeddings are injective smooth maps and ensure the original dynamical system's approximation can be recovered from the higher dimensional embedding spaces.
The functions $f_1, f_2,$ and $f_3$ are diffeomorphisms defined in Appendix~\ref{proof:glv_to_replicator}.
Furthermore, in Theorem~\ref{thm:main}, the subspace $\Z' \subseteq \Z$ for the $(\epsilon,T)$-approximate embedding is $f \circ \text{id} (\relint{\Delta^n})$.}
\label{fig:embedding_diagram}
\end{figure}
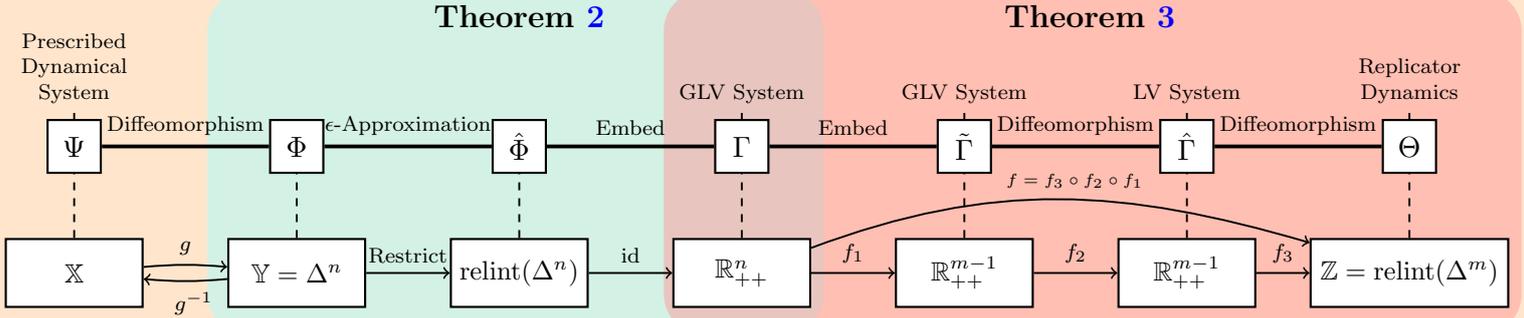

\begin{theorem}\label{thm:main}
Let $\X$ be the convex hull of a set of $n+1$ affinely independent points in $\reals^n$ and $\Psi$ be any flow on $\X$ given by a finite dimensional $C^1$ system of ODEs.
For any $\epsilon,T>0$, there exists $m \geq 0$ and a matrix $A \in \reals^{m \times m}$ such that $\Psi$ is $(\epsilon,T)$-approximately embedded in the flow given by replicator dynamics on $A$.
\end{theorem}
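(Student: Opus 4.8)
The plan is to prove Theorem~\ref{thm:main} by realizing the approximate embedding as the chain of transformations depicted in Figure~\ref{fig:embedding_diagram}, reducing the statement to two largely independent claims: (a) that the prescribed $C^1$ flow, once transferred onto the simplex, can be $\epsilon$-approximated by the flow of a generalized Lotka--Volterra (GLV) system; and (b) that every GLV system embeds, after suitable padding and a change of coordinates, into the replicator dynamics of a single payoff matrix. For the front end, since $\X$ is a full-dimensional simplex (the convex hull of $n+1$ affinely independent points), there is an affine diffeomorphism $g:\X\to\Y$, where $\Y$ is the simplex $\Delta^n$ rescaled to have $\|\cdot\|_{\infty}$-diameter $1$ (affine maps are diffeomorphisms, so the rescaling needed for Definition~\ref{def:approx_embedding}(i) is harmless); pushing $\Psi$ forward through $g$ yields a flow $\Phi$ on $\Delta^n$ whose vector field is again $C^1$, hence bounded and Lipschitz on the compact simplex, and which is tangent or inward-pointing on the boundary so that $\Delta^n$ is forward invariant. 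Composing $g$, the middle approximation, and the back-end embeddings then produces an $(\epsilon,T)$-approximate embedding of $\Psi$ in $\Theta$ in the exact sense of Definition~\ref{def:approx_embedding}, with $\Z' = f \circ \mathrm{id}(\relint{\Delta^n})$.

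For the middle step (Theorem~\ref{thm:GLV_approx_of_general_sys}), I would work on $\relint{\Delta^n}$, where every coordinate is strictly positive, and write the vector field of $\Phi$ in population form $\dot x_i = x_i M_i(\x)$ with $M_i(\x) = \dot x_i / x_i$. I would then approximate each fitness function $M_i$ uniformly by a multivariate generalized polynomial, invoking density of power-law functions via a Stone--Weierstrass argument in logarithmic coordinates; the resulting system is exactly a GLV system $\Gamma$ of the form in eq.~\ref{eq:general_LV}, agreeing with $\hat{\Phi}$ on the interior. To pass from closeness of vector fields to closeness of trajectories over the whole interval $[0,T]$ required by Definition~\ref{def:approx_embedding}(iii), I would apply a Gr\"onwall estimate: a uniform vector-field error $\delta$ produces a trajectory error bounded by $\delta(e^{LT}-1)/L$, so choosing $\delta$ small relative to $\epsilon$, $L$, and $T$ secures the bound.

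For the back end (Theorem~\ref{thm:GLV_to_Replicator}), I would turn $\Gamma$ into replicator dynamics using the classical \emph{recasting} idea: introduce auxiliary coordinates $z_j = \prod_{k} x_k^{B_{jk}}$ for the monomial terms, whose logarithmic derivatives $\dot z_j / z_j = \sum_k B_{jk}\, \dot x_k / x_k$ are affine in the $z$'s, so the generalized-polynomial fitness becomes an affine, i.e.\ Lotka--Volterra (LV), fitness. Because the monomial map is generally not invertible, I would first pad $\Gamma$ with dummy species and terms to obtain $\tilde{\Gamma}$ whose exponent data are square and invertible, making the recasting a genuine diffeomorphism $\reals^{m-1}_{++}\to\reals^{m-1}_{++}$ that carries $\tilde{\Gamma}$ to an LV system $\hat{\Gamma}$. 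Finally I would invoke the classical diffeomorphism between LV dynamics on $\reals^{m-1}_{++}$ and replicator dynamics on $\relint{\Delta^m}$ (through a map of the type $x_i \mapsto x_i/(1+\sum_j x_j)$ together with one added coordinate), reading off the symmetric payoff matrix $A\in\reals^{m\times m}$ whose replicator flow is $\Theta$.

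The step I expect to be the main obstacle is the middle one. The approximation in Definition~\ref{def:approx_embedding}(iii) must hold for \emph{every} interior initial point, yet $M_i = \dot x_i / x_i$ can grow without bound as $\x$ approaches $\partial\Delta^n$, and trajectories may start arbitrarily close to the boundary; controlling the uniform generalized-polynomial approximation there while keeping the effective Gr\"onwall constant finite over all of $\relint{\Delta^n}$ is the crux. A secondary, more bookkeeping difficulty is ensuring the recasting is a true diffeomorphism rather than a mere semiconjugacy, which is exactly what the padding to $\tilde{\Gamma}$ is designed to fix, and verifying that the composite $f = f_3\circ f_2\circ f_1$ together with $g$ remain diffeomorphisms on the relevant interiors, so that the error one actually controls is the error measured in the $\Y$-coordinates as demanded by condition (iii).
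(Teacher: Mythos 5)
Your overall architecture coincides with the paper's: the same decomposition into a GLV-approximation step (Theorem~\ref{thm:GLV_approx_of_general_sys}) and a GLV-to-replicator embedding (Theorem~\ref{thm:GLV_to_Replicator}), and your back end (Brenig-style recasting with padding to make the exponent matrix square and invertible, followed by the Hofbauer--Sigmund diffeomorphism between LV dynamics on $\reals^{m-1}_{++}$ and replicator dynamics on $\relint{\Delta^m}$) is essentially the paper's argument. One small correction there: the resulting payoff matrix $A$ is not symmetric in general; what is used is the single-population (symmetric-game) setting, not symmetry of $A$ as a matrix.

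The genuine gap is exactly the obstacle you flag in the middle step, and it is not a technicality you can defer: if you define $M_i(\x)=h_i(\x)/x_i$ and try to approximate these fitness functions uniformly on $\relint{\Delta^n}$, you are approximating unbounded functions on a non-compact set, so Stone--Weierstrass does not apply and the Gr\"onwall constant is not controlled. The paper sidesteps this entirely by approximating the vector field $h_i$ itself, which is continuous on the compact set $\Delta^n$, by an ordinary polynomial $p_i$ (with small affine corrections $\delta(\tfrac1n - y_i)$ forcing the field to point strictly inward on $\partial\Delta^n$, and with $p_n=-\sum_{i<n}p_i$ so the field stays tangent to the simplex). Only afterwards does one observe that $\tfrac{1}{y_i}p_i(\y)$ is a \emph{generalized} polynomial by definition (negative exponents are allowed), so that $\dot y_i = y_i\bigl(\pi(\y)+\tfrac{1}{y_i}p_i(\y)\bigr)=y_i\pi(\y)+p_i(\y)$, with $\pi(\y)=1-\|\y\|_1$, is a GLV system whose right-hand side is a genuine bounded polynomial; no division ever occurs in the dynamics, the Gr\"onwall constant is just the Lipschitz constant of $\h$ on $\Delta^n$, and the logistic term $\pi$ makes $\relint{\Delta^n}$ forward invariant and globally attracting in $\reals^n_{++}$. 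Your version also omits this $\pi$ term, without which the approximating system need not keep the simplex invariant (since $\sum_i x_i\tilde M_i(\x)$ need not vanish on $\Delta^n$), so trajectories could drift off the set where the comparison with $\Phi$ makes sense. To repair your proof, replace ``approximate the fitness functions'' with ``approximate the vector field by a polynomial tangent to and inward-pointing on the simplex, then divide by $y_i$ formally and add the logistic correction.''
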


A proof of Theorem~\ref{thm:main} follows immediately from Theorems~\ref{thm:GLV_approx_of_general_sys}~and~\ref{thm:GLV_to_Replicator} stated below.
The basic intuition of how Theorems~\ref{thm:main},~\ref{thm:GLV_approx_of_general_sys},~and~\ref{thm:GLV_to_Replicator} are proved and relate to one another is summarized in Figure~\ref{fig:embedding_diagram}.
The remainder of this section is dedicated to formally proving Theorem~\ref{thm:main}.
We begin by stating Theorems~\ref{thm:GLV_approx_of_general_sys}~and~\ref{thm:GLV_to_Replicator} along with their proof sketches---the full proofs are given in Appendices~\ref{proof:glv_approx} and~\ref{proof:glv_to_replicator} respectively.
We then conclude by demonstrating how these Theorems come together to prove Theorem~\ref{thm:main}.
It is worth noting that in many cases our proof techniques are constructive, and can be used to derive a matrix game that emulates the behaviour of a prescribed dynamical system under RD; a concrete example is given in \S\ref{sec:lorenz} where a matrix game giving rise to the iconic Lorenz system~\citep{lorenz1963deterministic} is constructed.

\begin{theorem} \label{thm:GLV_approx_of_general_sys}
Let $\Phi$ be a flow on $\Delta^n$ that is generated by a $C^1$ system of ODEs.
For any $\epsilon,T > 0$, there exists a flow $\Gamma$ on $\reals^n_{++}$ given by a system of GLV equations (eq.~\ref{eq:general_LV}) such that:
\begin{enumerate}[label=(\roman*)]
    \item A subspace of $\relint{\Delta^n}$ is a global attracting set of $\Gamma$.
    \item For every $\y \in \relint{\Delta^n}$ and $t \in [0,T]$ we have 
    \begin{equation*}
        \|\Phi^t(\y) - \Gamma^t(\y)\|_{\infty} < \epsilon~.
    \end{equation*}
\end{enumerate}
\end{theorem}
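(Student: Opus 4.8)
The plan is to realize the target field $V$ of $\Phi$ directly as the \emph{per-capita} (fitness) part of a GLV system, and to add a transversal term making a copy of $\relint{\Delta^n}$ globally attracting in $\reals^n_{++}$. The starting point is structural. Because $\Phi$ is a \emph{flow}, i.e. a group of homeomorphisms defined for all $t\in\reals$ on the compact simplex $\Delta^n$, each $\Phi^t$, being continuously connected to the identity $\Phi^0$, fixes $\Delta^n$, its relative interior, and each face $\{x_i=0\}$ setwise. Hence the generating $C^1$ field $V$ is tangent to every face and to the affine hull: $V_i\equiv 0$ on $\{x_i=0\}$ and $\sum_i V_i\equiv 0$ on $\Delta^n$. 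By Hadamard's lemma the first property yields a continuous factorization $V_i(\x)=x_i\,g_i(\x)$, so the per-capita rate $g_i=V_i/x_i$ extends continuously, hence boundedly, up to the boundary. This is exactly what makes a uniform approximation possible all the way to $\partial\Delta^n$, which would otherwise be obstructed by the $1/x_i$ blow-up inherent in the form $\dot x_i=x_iM_i(\x)$.

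Next I would approximate each $g_i$ by a generalized polynomial. Writing $x_k=e^{u_k}$ turns a monomial $\prod_k x_k^{B_{jk}}$ into $\exp(\langle B_j,u\rangle)$, so the generalized polynomials form an algebra that separates points and contains the constants; by Stone--Weierstrass they are dense in $C(\Delta^n)$ (ordinary polynomials already suffice on the compact simplex, but real exponents give extra freedom below). Choose $P_i$ with $\|P_i-g_i\|_\infty$ as small as desired on $\Delta^n$, and enforce exact tangency by projecting: set $\tilde P_i=P_i-\sum_j x_jP_j$, which is again a generalized polynomial, satisfies $\sum_i x_i\tilde P_i\equiv 0$ whenever $\sum_j x_j=1$, and still obeys $\tilde P_i\approx g_i$ on the simplex (since there $\sum_j x_jP_j\approx\sum_j V_j=0$).

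I would then define the GLV field $\dot x_i=x_i\big(\tilde P_i(\x)+D_i(\x)\big)$, where the transversal term $D_i$ is a generalized polynomial chosen to vanish on $\{s=1\}$, with $s:=\sum_j x_j$, so that on the simplex the dynamics is $\dot x_i=x_i\tilde P_i\approx V_i$ and the simplex is invariant, while off the simplex $D_i$ drives $s$ toward $1$. Property (ii) then follows cleanly: since the simplex is invariant, $\Gamma$ restricts to a flow on the compact $\Delta^n$ generated by the $C^1$ field $x_i\tilde P_i$, which is uniformly $O(\|g-\tilde P\|_\infty)$-close to $V$. As $V$ is Lipschitz on the compact simplex, continuous dependence on the vector field (Gr\"onwall) gives $\sup_{t\in[0,T]}\|\Phi^t(\y)-\Gamma^t(\y)\|_\infty\le \tfrac{\delta}{L}(e^{LT}-1)<\epsilon$ uniformly in $\y\in\relint{\Delta^n}$ once the approximation error $\delta$ is small enough.

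The hard part will be property (i), because the formal definition asks that $\bigcap_{t>0}\Gamma^t(\reals^n_{++})$ lie inside $\relint{\Delta^n}$. Since a flow is bijective, this intersection is precisely the set of points whose \emph{backward} orbit remains in $\reals^n_{++}$ for all time, so I must design $D_i$ so that every off-simplex trajectory leaves $\reals^n_{++}$ in finite backward time while the on-simplex behaviour is untouched. A plain logistic push $D_i=c(1-s)$ gives $\dot s=c\,s(1-s)$ and does force finite-time backward blow-up for $s>1$, but the region $s<1$ merely drifts toward the origin asymptotically and stays backward-complete, since $\dot x_i=x_iM_i$ with bounded $M_i$ cannot reach a coordinate hyperplane in finite time. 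The remedy is to let $D_i$ carry negative-exponent monomials, legal in a generalized polynomial, that are negligible near the simplex but make the field blow up near the boundary of $\reals^n_{++}$, forcing finite-time backward escape on both sides; verifying that such a $D_i$ exists without disturbing the approximation on the simplex, and that the resulting backward-complete set is exactly $\relint{\Delta^n}$, is the crux where the careful estimates concentrate.
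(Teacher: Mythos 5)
Your overall architecture matches the paper's: a Stone--Weierstrass polynomial approximation of the generating field, a Gr\"onwall estimate giving the $\tfrac{\delta}{L}(e^{LT}-1)$ bound for (ii), and the transversal term $1-\|\y\|_1$ making $\|\y\|_1$ obey the logistic equation so that the simplex attracts the orthant dynamics. Where you diverge is the Hadamard factorization $V_i=x_ig_i$ and the approximation of the per-capita rates $g_i$. This detour is unnecessary: since GLV fitness functions are \emph{generalized} polynomials, negative exponents are legal, so one can approximate $h_i$ itself by an ordinary polynomial $p_i$ and take $M_i(\y)=(1-\|\y\|_1)+p_i(\y)/y_i$ --- the ``$1/x_i$ blow-up'' you worry about is not an obstruction because only $y_iM_i=y_i(1-\|\y\|_1)+p_i$ need stay bounded, and this is exactly what the paper does. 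Moreover your justification of the premise $V_i\equiv 0$ on $\{x_i=0\}$ is wrong as stated: a homeomorphism of $\Delta^n$ continuously connected to the identity need not preserve the face stratification (for $n=2$ it is a homeomorphism of a disk and can move vertices into edge interiors). The correct reason is dynamical --- a two-sided flow generated by an ODE on $\Delta^n$ forces $V_i=0$ where $x_i=0$, else the backward orbit exits --- but note the paper deliberately works with the weaker inward-pointing condition $h_i\ge 0$ on $\{y_i=0\}$, under which your factorization fails; it restores well-posedness on $\Delta^n$ by adding the correction $\delta(\tfrac1n-y_i)$ to each approximant, which also makes the boundary strictly repelling, whereas your form $\dot x_i=x_i\tilde P_i$ only makes the faces invariant.

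The genuine gap is property (i). You correctly observe that under the paper's definition the global attracting set must contain $\bigcap_{t>0}\Gamma^t(\reals^n_{++})$, i.e.\ the points whose backward orbits persist in $\reals^n_{++}$, you correctly note that the logistic push alone leaves the region $\|\y\|_1<1$ potentially backward-complete, and you then defer the construction of the negative-exponent terms that would force finite-time backward escape, calling it ``the crux.'' As written, (i) is therefore not proved. (For calibration: the paper's own treatment of (i) is lighter than your analysis --- it derives $\tfrac{d}{dt}\|\y\|_1=\|\y\|_1(1-\|\y\|_1)$, concludes $\|\y\|_1\to 1$ as $t\to\infty$, and declares $\relint{\Delta^n}$ globally attracting without the backward-orbit argument you attempt. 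So your concern about the literal intersection condition is a fair one, but you must either adopt the forward-asymptotic reading, under which your construction already suffices, or actually exhibit the extra terms and verify they do not disturb the estimate in (ii); leaving it open leaves the theorem unproved.)
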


Formally constructing a flow $\Gamma$ with the properties stated in Theorem~\ref{thm:GLV_approx_of_general_sys} requires some technical legwork, but the intuition behind our construction of $\Gamma$ is rather straightforward.
First we get a polynomial approximation of the ODEs generating $\Phi$ from the well known Stone-Weierstrass theorem, which we call $\p = (p_1,\dots,p_n)$.
In our construction we ensure that $\p$ generates a forward invariant flow $\hat{\Phi}$ on $\Delta^n$ and has a subspace of $\relint{\Delta^n}$ as a global attracting set.
Then, for each $i \in [n]$, we divide $p_i$ by $y_i$ and add the resultant generalized polynomials to $\pi(\y) = (1 - \|\y\|_1)$, which yields a new generalized polynomial $\pi + \frac{1}{y_i} p_i$ for each $i \in [n]$.
By setting these new generalized polynomials, $\pi + \frac{1}{y_i} p_i$, as the fitness functions of a population system on $\reals^n_{++}$ we get the system generating $\Gamma$. 
The role of $\pi$ is to define logistic equation dynamics between the ODEs so that the dynamics of the system as a whole approaches $\Delta^n$.
Since the logistic equation ensures $\|\y\|_1 \to 1$ as $t \to \infty$, though the dynamics outside $\Delta^n$ may be different from those on $\Delta^n$, this construction ensures that the probability simplex $\Delta^n$ is attracting all of the dynamics.
Furthermore, not only is $\Delta^n$ forward invariant under the construction, but $\pi(\x) = 0$ for $\x \in \Delta^n$ and so the flow is exactly generated by the polynomials $\p$ that approximate $\Phi$.
A full proof of Theorem~\ref{thm:GLV_approx_of_general_sys} can be found in Appendix~\ref{proof:glv_approx}.

\begin{theorem} \label{thm:GLV_to_Replicator}
Let $\bar{\lambda} \in \reals^n$, $\bar{A} \in \reals^{n \times (m-1)}$, and $\bar{B} \in \reals^{(m-1) \times n}$ define a system of GLV equations~(eq.~\ref{eq:general_LV}) on $\reals^n_{++}$, where $m-1 \geq n$.
Let $\Gamma$ on $\reals^n_{++}$ be the flow generated by this system of GLV equations.
There exists a flow $\Theta$ on $\relint{\Delta^m}$ and a diffeomorphism $f:\reals^n_{++} \to \P \subseteq \relint{\Delta^m}$ such that:
\begin{enumerate}[label=(\roman*)]
    \item The flow $\Theta$ on $\relint{\Delta^m}$ is given by RD on a matrix game with payoff matrix $A \in \reals^{m \times m}$.
    \item The flow $\Theta|_{\P} = f(\Gamma)$ and $\Gamma = f^{-1}(\Theta|_{\P})$, where $\Theta|_{\P}$ is the flow given by $\Theta$ restricted to $\P$.
\end{enumerate}
\end{theorem}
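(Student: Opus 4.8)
I would realize the embedding as a composition $f = f_3 \circ f_2 \circ f_1$ of three diffeomorphisms, each one conjugating the flow of one population model to the next along the classical chain (generalized Lotka--Volterra) $\to$ (Lotka--Volterra) $\to$ (replicator). Two standard transformations from the theory of Lotka--Volterra systems do the heavy lifting: the \emph{quasi-monomial} (logarithmic) change of variables, which converts a generalized Lotka--Volterra system into an ordinary one, and Hofbauer's diffeomorphism identifying an $(m-1)$-species Lotka--Volterra flow with replicator dynamics on $\Delta^m$. The hypothesis $m-1 \geq n$ is what gives enough ``room'' to carry out these reductions.

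\textbf{From GLV to LV ($f_1$ and $f_2$).} Writing the fitness of species $i$ as $\bar\lambda_i + \sum_j \bar A_{ij}\prod_k x_k^{\bar B_{jk}}$, the key observation is that if one sets $u_j = \prod_k x_k^{\bar B_{jk}}$, then $\dot u_j / u_j = \sum_k \bar B_{jk}\,\dot x_k/x_k$ is affine in the $u_l$; hence the $u$-coordinates obey an \emph{ordinary} LV system with growth vector $\bar B\bar\lambda$ and interaction matrix $\bar B\bar A$. This substitution is a diffeomorphism onto its image exactly when the exponent matrix has full column rank, and it is a global diffeomorphism of $\reals^{m-1}_{++}$ when that matrix is square and invertible. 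I would therefore first apply a padding map $f_1$ that introduces $m-1-n$ auxiliary species (possible since $m-1\geq n$), extending the system to a GLV on $\reals^{m-1}_{++}$ whose $(m-1)\times(m-1)$ exponent matrix is invertible and whose auxiliary coordinates carry dynamics chosen so that $\Gamma$ is reproduced verbatim on the forward-invariant slice $f_1(\reals^n_{++})$. The quasi-monomial map $f_2$ applied to this padded system is then a genuine diffeomorphism of $\reals^{m-1}_{++}$ producing an ordinary LV flow $\hat\Gamma$.

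\textbf{From LV to replicator ($f_3$).} Given the LV system $\dot y_i = y_i\big(r_i + \sum_j c_{ij} y_j\big)$ on $\reals^{m-1}_{++}$, I would build $A \in \reals^{m\times m}$ by setting $a_{ij}=c_{ij}$ and $a_{im}=r_i$ for $i,j\in[m-1]$ and taking the last row to be zero, using the gauge freedom of replicator dynamics --- adding a constant to each column of $A$ leaves the dynamics unchanged --- to justify this normalization. The map $x_i = y_i/(1+\sum_l y_l)$ with $x_m = 1/(1+\sum_l y_l)$ is a diffeomorphism from $\reals^{m-1}_{++}$ onto $\relint{\Delta^m}$ carrying orbits of $\hat\Gamma$ to orbits of replicator dynamics on $A$. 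Composing, $f = f_3\circ f_2\circ f_1$ is a diffeomorphism of $\reals^n_{++}$ onto $\P := f(\reals^n_{++}) \subseteq \relint{\Delta^m}$, which yields (i) and (ii).

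\textbf{Main obstacle.} The delicate point is that Hofbauer's correspondence is an orbit equivalence accompanied by a time reparametrization: pulling replicator dynamics back through $y_i = x_i/x_m$ gives $\dot y_i = x_m\, y_i\big(r_i + \sum_j c_{ij}y_j\big)$, i.e. the LV field multiplied by the positive factor $x_m = (1+\sum_l y_l)^{-1}$. To obtain the \emph{exact} flow identity $\Theta|_{\P} = f(\Gamma)$ --- rather than mere equivalence of orbits --- this factor must be tracked and cancelled, which I would arrange by a compensating positive time rescaling built into the upstream LV/GLV field so that, after the change of velocity, the replicator flow matches $f(\Gamma)$ time-for-time. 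The hardest bookkeeping is ensuring the padding $f_1$ can \emph{simultaneously} make the exponent matrix invertible, keep $\reals^n_{++}$ as an embedded forward-invariant slice, and remain compatible with this time rescaling.
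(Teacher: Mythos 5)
Your proposal is correct and follows essentially the same route as the paper: pad the GLV system with stationary auxiliary species (initialized at $1$) to make the exponent matrix square and invertible, apply the quasi-monomial change of variables of \cite{brenig1989universal} to obtain an LV system, and then use the diffeomorphism of \cite{hofbauer1998book} onto $\relint{\Delta^m}$, with $f = f_3 \circ f_2 \circ f_1$ exactly as in Appendix~\ref{proof:glv_to_replicator}. The time-reparametrization issue you flag as the main obstacle is handled in the paper by the same device you suggest --- a change of velocity absorbing the positive factor $1/p_m$ --- so your treatment is, if anything, slightly more explicit about that step than the paper's.
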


A full proof of Theorem~\ref{thm:GLV_to_Replicator} appears in Appendix~\ref{proof:glv_to_replicator}.
The result follows from our construction of the payoff matrix $A~\in~\reals^{m \times m}$, which requires an intermediary step where the system of GLV equations on $\reals^{n}_{++}$ is embedded into a system of LV equations on $\reals^{m-1}_{++}$.
This embedding is guaranteed to exist due to a trick introduced by~\citet{brenig1989universal}.
First, the embedding trick adds dummy dimensions to the GLV system by padding $\bar{\lambda}$, $\bar{A}$, and $\bar{B}$ to define a qualitatively equivalent system of GLV equations on $\reals^{m-1}_{++}$---this step ensures the new GLV system is always stationary on the $m-n-1$ newly introduced dimensions and is identical to the original system on a submanifold of $\reals^{m-1}_{++}$.
Next, the embedding trick uses a diffeomorphism to transform the enlarged GLV equations on $\reals^{m-1}_{++}$ into a system of LV equations on $\reals^{m-1}_{++}$.
As summarized in Figure~\ref{fig:embedding_diagram}, the original GLV equations on $\reals^{n}_{++}$ generate a flow $\Gamma$ and we use the embedding trick to place it into a flow $\hat{\Gamma}$ given by the LV equations.
Using a diffeomorphism by~\cite{hofbauer1998book}, which maps trajectories of LV equations in $\reals^{m-1}_{++}$ to trajectories of RD in $\relint{\Delta^m}$, we construct the game matrix $A \in \reals^{m \times m}$.
This matrix $A$ under RD generates the flow $\Theta$ on $\relint{\Delta^m}$ that we are ultimately interested in.
Finally, to find the subspace $\P \subseteq \relint{\Delta^m}$ and diffeomorphism $f$, we note that the embedding constructed by the embedding trick is an injective smooth map from $\reals^{n}_{++}$ to $\reals^{m-1}_{++}$; define $\hat{f}:\reals^{n}_{++} \to \relint{\Delta^m}$ as the composition of this embedding map with the diffeomorphism by~\cite{hofbauer1998book}.
The subspace $\P$ is precisely $\hat{f}(\reals^{n}_{++})$ and the diffeomorphism $f$ is obtained by restricting the range of $\hat{f}$ to $\P$.

With Theorems~\ref{thm:GLV_approx_of_general_sys}~and~\ref{thm:GLV_to_Replicator} stated we are now ready to prove Theorem~\ref{thm:main}.
Let $\X$ be a topological space with a diffeomorphism $g : \X \to \Delta^n$ and let $\Psi$ be \emph{any} flow on $\X$ given by a finite dimensional $C^1$ system of ODEs.
Define the flow $\Phi = g(\Psi)$ on $\Delta^n$, i.e.~the dynamical system diffeomorphic to $\Psi$ via $g$.
From Theorem~\ref{thm:GLV_approx_of_general_sys} we know that for any $\epsilon,T > 0$ there exists a flow $\Gamma$ given by a system of GLV equations on $\reals^n_{++}$ such that $\|\Phi^t(\y) - \Gamma^t(\y)\|_{\infty} < \epsilon$ for every $\y \in \relint{\Delta^n}$ and $t \in [0,T]$.
From Theorem~\ref{thm:GLV_to_Replicator}, for $m \geq n$, we know there exists a flow $\Theta$ on $\relint{\Delta^m}$ and diffeomorphism $f:\reals^n_{++}~\to~\P~\subseteq~\relint{\Delta^m}$ such that $\Theta$ restricted to $\P$ is diffeomorphic to $\Gamma$ via $f$.
Let $\Theta|_{\P}$ be the flow given by $\Theta$ restricted to $\P$.
Since $g(\Psi) = \Phi$, $f^{-1}(\Theta|_{\P}) = \Gamma$, and $f(\relint{\Delta^n}) \subset \P = f(\reals^n_{++})$, it follows that $\|g(\Psi^t(g^{-1}(\y))) - f^{-1}(\Theta^t(f(\y)))\|_{\infty} < \epsilon$ for every $\y \in \relint{\Delta^n}$ and $t \in [0,T]$.
Thus, by setting $\Z = \relint{\Delta^m}$, $\Z' = f(\relint{\Delta^n})$, and $\Y = \Delta^n$, we have shown that $\Psi$ is $(\epsilon,T)$-approximately embedded in $\Theta$.
Furthermore, from Theorem~\ref{thm:GLV_to_Replicator} we know that $\Theta$ is the flow given by replicator dynamics on a matrix game with payoff matrix $A \in \reals^{m \times m}$.
The convex hull of $n+1$ affinely independent points in $\reals^n$ is a special case of $\X$, so we have proven Theorem~\ref{thm:main}.

\section{The Lorenz Game}\label{sec:lorenz}
To demonstrate how the construction in~\S\ref{sec:universal_learning_dynamics} can be applied, we will highlight the construction of a matrix game whose dynamics under RD embeds the iconic system of~\citet{lorenz1963deterministic}; the full construction of this matrix game can be found in Appendix~\ref{append:Lorenz}. 
The Lorenz system's strange attractor, the ``butterfly'', has nearly become synonymous with chaotic flows and is given by the following three dimensional the system of ODEs in $\reals^3$
\begin{align*}
  \dot{x}_1 &= \sigma(x_2 - x_1) \\
  \dot{x}_3 &= x_1(\rho-x_3)-x_2 \\
  \dot{x}_3 &= x_1 x_2 - \beta x_3~,
\end{align*}
where $\sigma,\rho,\beta > 0$ are constants.
Due to the fame of the Lorenz attractor it has been studied extensively and analyses of its dynamics under various settings of its parameters can be found in many sources~(see e.g.~\cite{hateley2019lorenz}).
We will focus on the setting first studied by Lorenz, when $\rho = 28$, $\sigma = 10$, and $\beta = 8/3$.
Given these parameters, it is straightforward to show that, for sufficiently large $r>0$, the sphere $\mathcal{R} = \{(x_1,x_2,x_3): x_1^2 + x_2^2 + (x_3 - \rho - \sigma)^2 = r\}$ is globally attracting and forward invariant under the Lorenz system.
(Moreover, all initial conditions converge to $\mathcal R$ exponentially fast.)

Shifting the solutions of the Lorenz equation by $r$ in the positive direction for all three dimensions, and then rearranging terms, we arrive at the following GLV system on $\reals^3_{++}$:

\vskip8pt
\noindent\hspace{5pt}\scalebox{0.9}{
\begin{math}
\begin{aligned}
  \dot{x}_1 &= \sigma \left((x_2-r) - (x_1-r)\right) \\
  \dot{x}_3 &= (x_1-r) \left(\rho - (x_3-r) \right)-(x_2-r) \\
  \dot{x}_3 &= (x_1-r) (x_2-r) - \beta (x_1-r)\\
\end{aligned}
\quad\Longrightarrow\quad
\begin{aligned}
  \dot{x}_1 &= x_1 \left(\sigma x_2 x_1^{-1} - \sigma \right) \\
  \dot{x}_2 &= x_2 \left(\eta x_1 x_2^{-1} - x_1 x_3 x_2^{-1} + r x_3 x_2^{-1} + \alpha x_2^{-1} - 1 \right) \\
  \dot{x}_3 &= x_3 \left(x_1 x_2 x_3^{-1} - r x_1 x_3^{-1} - r x_2 x_3^{-1} + \mu x_3^{-1} - \beta \right)
\end{aligned}
\end{math}
}
\vskip8pt

\noindent
where $\eta = \rho+r$, $\alpha = r - \rho r - r^2$, and $\mu = r^2 + \beta r$.
Since we can rewrite the shifted Lorenz system in this GLV form, there is no need to derive the approximation highlighted in Theorem~\ref{thm:GLV_approx_of_general_sys} and we can immediately apply Theorem~\ref{thm:GLV_to_Replicator}.

From the construction used to prove Theorem~\ref{thm:GLV_to_Replicator}, we get the game matrix $A \in \reals^{11 \times 11}$ that can be written as 
\begin{equation*}
    A = 
    \begin{bmatrix}
    -\sigma & \eta & -1 & r & \alpha & 0 & 0 & 0 & 0 & (\sigma - 1) & 0\\
    \sigma & -\eta & 1 & -r & -\alpha & 0 & 0 & 0 & 0 & (1 - \sigma) & 0\\
    \sigma & -\eta & 1 & -r & -\alpha & 1 & -r & -r & \mu & (1 - \sigma - \beta) & 0\\
    0 & -\eta & 1 & -r & -\alpha & 1 & -r & -r & \mu & (1 - \beta) & 0\\
    0 & -\eta & 1 & -r & -\alpha & 0 & 0 & 0 & 0 & 1 & 0\\
    \sigma & \eta & -1 & r & \alpha & -1 & r & r & -\mu & (\beta - \sigma - 1) & 0\\
    \sigma & 0 & 0 & 0 & 0 & -1 & r & r & -\mu & (\beta - \sigma) & 0\\
    0 & \eta & -1 & r & \alpha & -1 & r & r & -\mu & (\beta - 1) & 0\\
    0 & 0 & 0 & 0 & 0 & -1 & r & r & -\mu & \beta & 0\\
    0 & 0 & 0 & 0 & 0 & 0 & 0 & 0 & 0 & 0 & 0 \\
    0 & 0 & 0 & 0 & 0 & 0 & 0 & 0 & 0 & 0 & 0
    \end{bmatrix}~.
\end{equation*}
The solution of RD on $A$ is plotted in Figure~\ref{fig:lorenz}.
It is worth noting that the last row and column are all zeros since they correspond to a compactifying dimension added during our construction for normalizing each dimension.
Similarly, the second to last row of zeros and its corresponding column serves the role of keeping track of the constants in the shifted Lorenz system.
In addition to $A$, we have a diffeomorphism $f: \reals^3_{++} \to \P \subset \relint{\Delta^{11}}$ from $\x \in \reals^3_{++}$ to $\p \in \relint{\Delta^{11}}$ that is written as
\begin{equation*}
    f(\x) = \left(\frac{x_1^{-1} x_2^{1}}{N},\frac{x_1^{1} x_2^{-1}}{N},\frac{x_1^{1} x_2^{-1} x_3^{1}}{N},\frac{x_2^{-1} x_3^{1}}{N},\frac{x_2^{-1}}{N},\frac{x_1^{1} x_2^{1} x_3^{-1}}{N},\frac{x_1^{1} x_3^{-1}}{N},\frac{x_2^{1} x_3^{-1}}{N},\frac{x_3^{-1}}{N},\frac{1}{N},\frac{1}{N} \right)~,
\end{equation*}
where $N$ is a normalization factor given by the sum of the numerators in $f$.
Since we were able to rewrite the Lorenz system in GLV form exactly, without an approximation step, RD on this game is a true embedding of the Lorenz system's strange attractor.

\section{Discussion}\label{sec:disc}
In this paper we show that learning dynamics in finite matrix games can be as complex as any system of ODEs defined on a set diffeomorphic to the probability simplex.
This result has multiple implications for both multi-agent machine learning and algorithmic game theory.

\subsection{Designing Games, Not Just Algorithms}\label{subsec:designing_games}
The complex dynamics that arise from training multi-loss machine learning models, such as Generative Adversarial Networks (GANs), has recently become the object of intense study.
As highlighted in~\S\ref{subsec:rel_works}, this study has led to several results reporting possible modes of failure and algorithms seeking to correct pathological training behaviours.
Our main result, Theorem~\ref{thm:main}, shows that essentially any dynamics can arise in highly simplistic games and thus it is unreasonable to expect any general algorithmic solution to exist for multi-agent machine learning.
As this realization only becomes grimmer when considering possibilities in more complex games, our result drives home a clear message:
multi-agent machine learning has the capability of modelling essentially any process, but this capability comes at the price of interpretability if designing the underlying game is left as an afterthought. Maybe the games themselves should evolve  over time so as to help  guide multi-agent learning~\citep{leibo2019autocurricula,skoulakis2020evolutionary}.

\subsection{Implications on Hardness of Nash Equilibria}\label{subsec:agt}
Our results offer an interesting conclusion to
a progression of results from algorithmic game theory which have established the hardness of computing Nash equilibria.
First, it was shown that computing Nash equilibria is impractical or impossible in general, as it is a PPAD-hard problem~\citep{daskalakis2006complexity}.
Next, it was shown that learning dynamics do not converge to equilibria in general~\citep{daskalakis2010learning,sato2002chaos,mertikopoulos2018cycles,flokas2020no}.
Recently it was revealed that, not only is convergence to equilibria not guaranteed, learning dynamics in games can even be provably chaotic~\citep{palaiopanos2017multiplicative,CFMP2019,Thip18,cheung2019vortices,cheung2020chaos}.
In this paper, we show that, indeed, learning dynamics can effectively simulate \emph{any} behavior even in the special case of finite matrix games.

\begin{figure}[t]
  \centering
  \includegraphics[scale=.305]{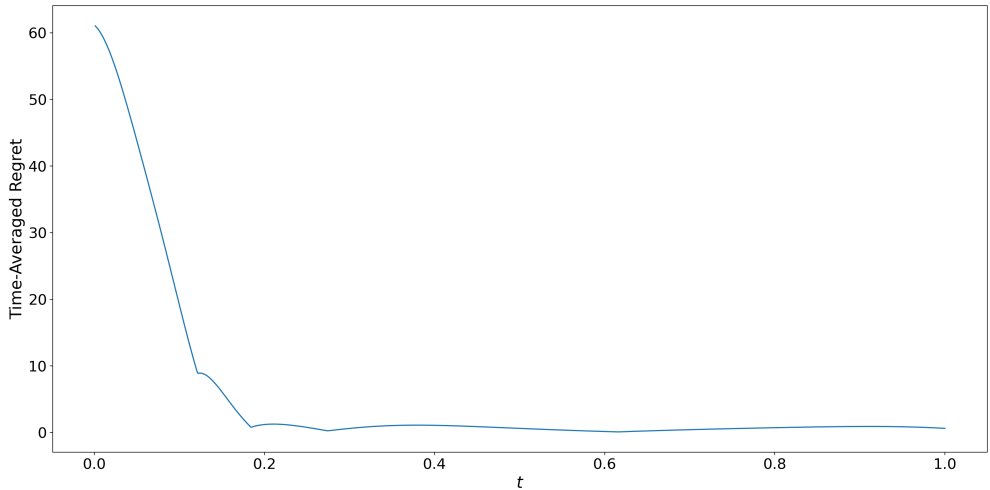}
  \caption{Time-averaged regret of the trajectory shown in Figure~\ref{fig:lorenz}. 
  As theoretical predictions suggest~\citep{mertikopoulos2018cycles}, we see that regret quickly converges to zero.}
  \label{fig:no_regret}
\end{figure}
\subsection{No-Regret Strange Attractors}\label{subsec:regret}
A popular measure of performance for online learning algorithms is \emph{regret}, which measures the difference between an algorithm's \emph{average} performance against the performance of the best \emph{fixed} strategy in hindsight.
When the regret of an algorithm tends to zero as $t \to \infty$ for all sets of input, the algorithm is said to be \emph{no-regret}.
Though analyzing an algorithm's regret provides useful insights and knowing that an algorithm has no-regret is a good guarantee to have, our result shows that having a no-regret algorithm provides effectively no insight into the system's day-to-day behaviour.
The arbitrary behaviour of no-regret learning algorithms in games is perhaps best exemplified by our construction of the Lorenz game in~\S\ref{sec:lorenz}.
Since RD is known to have no-regret in arbitrary games~(e.g.,~\cite{mertikopoulos2018cycles}) and we have embedded the Lorenz system's strange attractor into RD on a matrix game, it follows that we have constructed a game where it is possible to have no-regret while the day-to-day dynamics move along a strange attractor---this is demonstrated in Figure~\ref{fig:no_regret}.
To the best of our knowledge this is the first instance where this possibility has been formally established.
 
\bigskip
We hope that our work inspires further investigations in each of these directions, as we keep exploring the impressive expressive power of multi-agent learning dynamics.

\section*{Acknowledgements}
Georgios Piliouras gratefully acknowledges grant PIE-SGP-AI-2020-01, NRF2019-NRF-ANR095 ALIAS grant and NRF 2018 Fellowship NRF-NRFF2018-07.

\bibliographystyle{plainnat}
\bibliography{ms}

\appendix
\section{Proofs}\label{append:proofs}

\subsection{Proof of Theorem~\ref{thm:GLV_approx_of_general_sys}}\label{proof:glv_approx}
\setcounter{theorem}{1}
\begin{theorem}
Let $\Phi$ be a flow on $\Delta^n$ that is generated by a $C^1$ system of ODEs.
For any $\epsilon,T > 0$, there exists a flow $\Gamma$ on $\reals^n_{++}$ given by a system of GLV equations (eq.~\ref{eq:general_LV}) such that:
\begin{enumerate}[label=(\roman*)]
    \item A subspace of $\relint{\Delta^n}$ is a global attracting set of $\Gamma$.
    \item For every $\y \in \relint{\Delta^n}$ and $t \in [0,T]$ we have 
    \begin{equation*}
        \|\Phi^t(\y) - \Gamma^t(\y)\|_{\infty} < \epsilon~.
    \end{equation*}
\end{enumerate}
\end{theorem}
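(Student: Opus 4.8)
The plan is to realize the target flow inside a \emph{logistic collar} of the simplex. Let $V:\Delta^n\to\reals^n$ be the $C^1$ vector field generating $\Phi$; since $\Phi$ is a flow on $\Delta^n$, $V$ is tangent to the simplex ($\sum_i V_i\equiv 0$ on $\Delta^n$) and its faces are invariant, so $V_i\ge 0$ on $\{x_i=0\}$. Let $L$ be a Lipschitz constant for $V$ on the compact set $\Delta^n$. The key idea is to build a polynomial vector field $\p=(p_1,\dots,p_n)$ that (a) approximates $V$ uniformly, (b) is exactly tangent to $\Delta^n$, and (c) points \emph{strictly} inward on $\partial\Delta^n$; then to set the fitness functions to $M_i(\y)=\pi(\y)+\tfrac{1}{y_i}p_i(\y)$ with $\pi(\y)=1-\|\y\|_1$, so that off-simplex motion is governed by a decoupled logistic equation pulling trajectories onto $\Delta^n$, while on-simplex motion reproduces $\p$.

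For the construction of $\p$, I would first replace $V$ by $\tilde V=V+\kappa W$, where $W_i(\x)=\tfrac1n-x_i$ points toward the barycenter; $W$ is tangent ($\sum_i W_i\equiv 0$ on $\Delta^n$) and satisfies $W_i=\tfrac1n>0$ on $\{x_i=0\}$, so $\tilde V$ is still $C^1$ and tangent but now points strictly inward with margin $\kappa/n$, at the cost of a perturbation of size $O(\kappa)$. Next, I apply Stone--Weierstrass to approximate each $\tilde V_i$ by a polynomial $q_i$ within $\delta'$ on $\Delta^n$, and \emph{project back onto the tangent space} by setting $p_i=q_i-\tfrac1n\sum_j q_j$. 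Since $\sum_j\tilde V_j\equiv 0$ on $\Delta^n$, this projection costs at most another factor of $2$ in the error while making $\sum_i p_i\equiv 0$ an exact polynomial identity; choosing $\delta'<\kappa/(2n)$ keeps $\p$ strictly inward-pointing on $\partial\Delta^n$, and choosing $\kappa,\delta'$ small enough makes $\|\p-V\|_\infty<\delta$ on $\Delta^n$ for a $\delta$ fixed below. The exact identity $\sum_i p_i\equiv 0$ together with strict inward pointing gives, by a Nagumo-type invariance argument, that the polynomial flow $\hat\Phi$ generated by $\p$ keeps $\Delta^n$ (indeed $\relint{\Delta^n}$) forward invariant.

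With $\p$ in hand I define the GLV system $\dot y_i=y_i M_i(\y)$ on $\reals^n_{++}$; since $p_i$ is a polynomial, $\tfrac{1}{y_i}p_i$ and $\pi$ are multivariate generalized polynomials, so this is a bona fide system of the form~(\ref{eq:general_LV}), and I call its flow $\Gamma$. Two computations then finish the proof. First, on $\Delta^n$ we have $\pi\equiv 0$, hence $\dot y_i=p_i(\y)$ there, i.e.\ $\Gamma$ restricted to $\Delta^n$ coincides with $\hat\Phi$; combined with a Gr\"onwall estimate $\|\hat\Phi^t(\y)-\Phi^t(\y)\|_\infty\le\tfrac{\delta}{L}(e^{LT}-1)$, choosing $\delta$ small (which in turn dictates $\kappa,\delta'$) yields property (ii) for all $\y\in\relint{\Delta^n}$ and $t\in[0,T]$. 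Second, writing $u=\|\y\|_1$ and using $\sum_i p_i\equiv 0$ gives the \emph{decoupled} logistic law $\dot u=u(1-u)$, so $u\to 1$ from every initial condition in $\reals^n_{++}$; this drives all trajectories to the hyperplane $\{\|\y\|_1=1\}$, while the strict inward field confines the limiting dynamics to a compact forward-invariant set $\mathcal K\subset\relint{\Delta^n}$, which is then the global attracting set required by property (i).

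The step I expect to be the main obstacle is reconciling the three demands on $\p$ at once --- uniform closeness to $V$, exact tangency, and strict inward-pointing on the boundary --- together with the global-attractor argument of the last paragraph, where the transverse logistic contraction toward $\{\|\y\|_1=1\}$ must be combined with the tangential inward drift to certify that the attractor lies in the \emph{relative interior} of the simplex rather than merely in $\Delta^n$. The barycentric perturbation $W$ and the averaging projection are the two devices that make the first issue routine, and the decoupling of $u=\|\y\|_1$ is what makes the second tractable.
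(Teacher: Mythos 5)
Your proposal is correct and follows essentially the same route as the paper's proof: Stone--Weierstrass polynomial approximation, a barycentric correction (your $\kappa W$ with $W_i=\tfrac1n-x_i$ is precisely the paper's $\delta(\tfrac1n-y_i)$ term) to enforce exact tangency and strict inward-pointing on $\partial\Delta^n$, the logistic collar $M_i(\y)=(1-\|\y\|_1)+\tfrac{1}{y_i}p_i(\y)$, a Gr\"onwall estimate for property (ii), and the decoupled logistic law $\tfrac{d}{dt}\|\y\|_1=\|\y\|_1(1-\|\y\|_1)$ for property (i). The only difference is cosmetic: you perturb before approximating and symmetrize by subtracting the mean, whereas the paper enforces tangency by defining $\hat p_n=-\sum_{i<n}\hat p_i$ and adds the inward correction afterward.
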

\begin{proof}
Suppose the flow $\Phi$ is given by a system of ODEs $\h : \Delta^n \to \reals^n$, i.e. $\dot{y}_i = h_i(\y)$.
We will first construct a flow $\hat{\Phi}$ that approximates $\Phi$ within $\epsilon$ on $\Delta^n$ for any $T,\epsilon > 0$.
Importantly, our construction ensures that $\hat{\Phi}$ is given by a system of polynomials that is well defined on $\Delta^n$.
To construct $\hat{\Phi}$ in a way where these properties are satisfied, we find a polynomial approximation of $\Phi$ and then add correction terms to the approximation that ensure the resultant polynomials are well behaved on the boundary of $\Delta^n$.
We conclude the proof by constructing the flow $\Gamma$ in the Theorem statement, which has $\hat{\Phi}$ embedded on a globally attracting set of $\Gamma$.
Note that the construction of the population system giving $\Gamma$ is related to the construction in~\cite{smale1976differential}, where it is shown that some additional bookkeeping guarantees that $\Gamma$ will satisfy properties commonly used in mathematical ecology for modeling species in competition.

The Stone-Weierstrass Theorem famously implies that any continuous function on a compact topological space can be approximated to an arbitrary degree of accuracy with a continuous sequence of polynomials.
It follows that, by the Stone-Weierstrass Theorem, for any $\delta > 0$ and every $i \in [n-1]$ there exists a polynomial $\hat{p}_i$ such that for every $\y \in \Delta^n$ we have $|h_i(\y) - \hat{p}_i(\y)| < \frac{\delta}{n^2}$.
Furthermore, since $\h$ is defined on $\Delta^n$ and therefore has the tangent space $T\Delta^n$ defined in~\S\ref{subsec:dynamics_theory}, we know that $h_n = - \sum_{i \in [n-1]} h_i$, and setting $\hat{p}_n = - \sum_{i \in [n-1]} \hat{p}_i$ guarantees that
\begin{equation*}
    |h_n(\y) - \hat{p}_n(\y)| \leq \sum_{i \in [n-1]}\left|\hat{p}_i(\y) - h_i(\y)\right| < \delta/n~.
\end{equation*}

It is mentioned in~\S\ref{subsec:dynamics_theory} that a dynamical system must ``point inwards'' on the boundary for it to be well defined on $\Delta^n$.
Therefore let us now consider the behaviour of $\hat{p}_i$ on the boundary of $\Delta^n$, i.e.~$\y \in \Delta^n$ such that some $y_i = 0$.
We know that, for each $i \in [n-1]$, $h_i(\y) \geq 0$ for $\y \in \Delta^n$ such that $y_i = 0$, therefore $\hat{p}_i(\y) > -\delta/n^2$.
Similarly, $h_n(\y) \geq 0$ for $\y \in \Delta^n$ such that $y_n = 0$, therefore $\hat{p}_n(\y) > -\delta/n$.
It follows that to use $\hat{p}_i$ to construct a polynomial approximation on $\Delta^n$ of the flow $\Phi$, we will need to add an appropriate correction term to each $\hat{p}_i$.
Define $p_i(\y)~=~\hat{p}_i(\y) + \delta(\tfrac{1}{n} - y_i)$ for $i \in [n-1]$, and $p_n(\y) = -\sum_{i \in [n-1]} p_i(\y)$ as before.
Observe that we have $p_n(\y) = - \sum_{i \in [n-1]} (\hat p_i(\y) + \delta(\tfrac{1}{n} - y_i)) = \hat p_n - \delta \tfrac{n-1}{n} + \delta \sum_{i \in [n-1]} y_i = \hat p_n + \delta(\tfrac{1}{n} - y_n)$.
It follows that, for $i \in [n]$ and $\y \in \Delta^n$, we have
\begin{equation*}
    |h_i(\y) - p_i(\y)| = |h_i(\y) - \hat{p}_i(\y) - \delta (\tfrac{1}{n} - y_i)| < 2\delta~.
\end{equation*}
Furthermore, for $i \in [n]$, when $\y$ is on the boundary of $\Delta^n$ this construction ensures that $p_i(\y) > 0$ when $y_i = 0$ and that $p_i(\y) < 0$ when $y_i = 1$. 
We therefore know the dynamical system given by $\p=(p_1,\dots,p_n)$ is well defined on $\Delta^n$ and has a subspace of $\relint{\Delta^n}$ as a global attracting set.

Let $\hat{\Phi}$ be the flow given by the approximating polynomials $\p=(p_1,\dots,p_n)$.
By definition $\frac{d \Phi^t}{d t}(\y) = \h(\Phi^t(\y))$ and $\frac{d \hat{\Phi}^t}{d t}(\y) = \p(\hat{\Phi}^t(\y))$ for every $\y \in \Delta^n$ and $t \in \reals$.
Furthermore, since $\h$ is $C^1$ and $\Delta^n$ is compact, we know that $\h$ is Lipschitz continuous. 
Letting $L$ denote the Lipschitz constant for $\h$ with respect to $\|\cdot\|_\infty$, it follows that for every $\y \in \Delta^n$ and $t \in \reals$,
\begin{align*}
    \|\Phi^t(\y) - \hat{\Phi}^t(\y)\|_{\infty} &= \int_{0}^{t} \|\frac{d \Phi^s}{d s}(\y) - \frac{d \hat{\Phi}^s}{d s}(\y)\|_{\infty}~ds \\
    &= \int_{0}^{t} \|\h(\Phi^s(\y)) - \p(\hat{\Phi}^s(\y))\|_{\infty}~ds \\
    &\leq \int_{0}^{t} \|\h(\Phi^s(\y)) - \h(\hat{\Phi}^s(\y))\|_{\infty}~ds + \int_{0}^{t} \|\h(\hat{\Phi}^s(\y)) -  \p(\hat{\Phi}^s(\y))\|_{\infty}~ds\\
    &\leq \int_{0}^{t} L \|\Phi^s(\y) - \hat{\Phi}^s(\y)\|_{\infty}~ds + 2 t \delta~.
\end{align*}
Defining $R(t) = \int_{0}^{t} L \|\Phi^s(\y) - \hat{\Phi}^s(\y)\|_{\infty}~ds + 2 t \delta$, we have
\begin{equation*}
    \dot{R}(t) = L \|\hat{\Phi}^t(\y) - \Phi^t(\y) \|_{\infty} + 2 \delta \leq L R(t) + 2 \delta~.
\end{equation*}
Letting $z(0) = R(0) = 0$ and $\dot z = Lz+2\delta \geq \dot R$, and solving for $z(t)$, we have
\begin{equation*}
    L \|\hat{\Phi}^t(\y) - \Phi^t(\y) \|_{\infty} + 2 \delta = \dot{R}(t) \leq z(t) = 2 \delta e^{L t}~.
\end{equation*}
Thus, for every $t \in [0,T]$ and $\y \in \Delta^n$, we have
\begin{equation}
    \label{eq:flow-approximation}
    \|\hat{\Phi}^t(\y) - \Phi^t(\y) \|_{\infty} \leq \frac{2 \delta}{L} \left(e^{L t} - 1\right) < \epsilon~,
\end{equation}
where we set $\epsilon = \frac{\delta}{L} \left(e^{L T} - 1\right)$.

We will now embed $\hat{\Phi}$, restricted to $\relint{\Delta^n}$, inside of a flow $\Gamma$ on $\reals^{n}_{++}$ given by a population system with generalized polynomial fitness functions.
Letting $\pi(\y) = (1 - \|\y\|_1)$, consider the population system $\textbf{M}$ on $\reals^{n}_{++}$ given by fitness functions $M_i(\y) = \pi(\y) + \frac{1}{y_i} p_i(\y)$ for each $i \in [n]$, where $p_i$ are the polynomials constructed above generating $\hat{\Phi}$.
Note that $\textbf{M}$ is given by the ODEs
\begin{equation*}
    \dot{y_i} = y_i \pi(\y) + p_i(\y)~.
\end{equation*}
By construction, $\relint (\Delta^n)$ is forward invariant under $\textbf{M}$, as $\pi(\y)=0$ on $\Delta^n$.
Furthermore, observe that for $\y = \y(t) \in \reals^n_{++}$ the population system $\textbf{M}$ has
\begin{align*}
    \frac{d}{d t} \|\y\|_1 &= \sum_{i \in [n]} y_i \pi(\y) + \sum_{i \in [n]} p_i(\y)\\
    &=  \|\y\|_1 \pi(\y) \\
    &= \|\y\|_1 (1 - \|\y\|_1)~,
\end{align*}
the logistic equation.
Thus, for every $\y \in \reals^n_{++}$, we know $\|\y\|_1 \to 1$ as $t \to \infty$.
It follows that $\relint (\Delta^n)$ is globally attracting for the dynamical system given by $\textbf{M}$.

As a final step define $\Gamma$ to be the flow on $\reals^n_{++}$ given by $\textbf{M}$.
By our construction, we know that a subspace of $\relint{\Delta^n}$ is a global attracting set of $\Gamma$ and that for $\y \in \relint{\Delta^n}$ we have $\Gamma = \hat{\Phi}$.  
All that remains to show is that $\Gamma$ is given by a system of GLV equations.
Recall that multivariate generalized polynomials on $\y \in \reals^n$ are defined as functions of the form
\begin{equation*}
    \sum_{j \in [m]} a_{j} \prod_{k \in [n]} y^{b_{k}}_k
\end{equation*}
where each $a_j \in \reals$ and $b_k \in \reals$. 
It is easy to check that the set of generalized polynomials is closed under multiplication and addition.
Therefore $\frac{1}{y_i} p_i(\y)$ is a generalized polynomial, $\pi(\y)$ is a generalized polynomial, and so $M_i(\y) = \pi(\y) + \frac{1}{y_i} p_i(\y)$ is a generalized polynomial for each $i \in [n]$.
Since the fitness functions $M_i$ for each $i \in [n]$ is given by a generalized polynomial, the flow $\Gamma$ on $\reals^n_{++}$ is given by a system of GLV equations by definition.
Furthermore, we showed that part (i) of the Theorem follows since $\Gamma$ has $\relint{\Delta^n}$ as a global attracting set.
In addition, we showed that part (ii) of the Theorem follows, as $\Gamma|_{\relint{\Delta^n}} = \hat\Phi|_{\relint{\Delta^n}}$.
\end{proof}

\subsection{Proof of Theorem~\ref{thm:GLV_to_Replicator}}\label{proof:glv_to_replicator}
\setcounter{theorem}{2}
\begin{theorem}
Let $\bar{\lambda} \in \reals^n$, $\bar{A} \in \reals^{n \times (m-1)}$, and $\bar{B} \in \reals^{(m-1) \times n}$ define a system of GLV equations~(eq.~\ref{eq:general_LV}) on $\reals^n_{++}$, where $m-1 \geq n$.
Let $\Gamma$ on $\reals^n_{++}$ be the flow generated by this system of GLV equations.
There exists a flow $\Theta$ on $\relint{\Delta^m}$ and a diffeomorphism $f:\reals^n_{++} \to \P \subseteq \relint{\Delta^m}$ such that:
\begin{enumerate}[label=(\roman*)]
    \item The flow $\Theta$ on $\relint{\Delta^m}$ is given by RD on a matrix game with payoff matrix $A \in \reals^{m \times m}$.
    \item The flow $\Theta|_{\P} = f(\Gamma)$ and $\Gamma = f^{-1}(\Theta|_{\P})$, where $\Theta|_{\P}$ is the flow given by $\Theta$ restricted to $\P$.
\end{enumerate}
\end{theorem}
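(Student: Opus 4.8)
The plan is to build $f$ and $\Theta$ exactly along the chain $\reals^n_{++}\to\reals^{m-1}_{++}\to\reals^{m-1}_{++}\to\relint{\Delta^m}$ drawn in Figure~\ref{fig:embedding_diagram}, as a composition of three smooth maps $f=f_3\circ f_2\circ f_1$. First I would \emph{pad} the given GLV system up to $m-1$ variables so that its quasimonomial exponent matrix becomes square and invertible (the trick of \citet{brenig1989universal}); then apply the \emph{quasimonomial change of variables}, which converts this padded GLV system into a Lotka--Volterra (LV) system; and finally invoke the classical LV-to-replicator diffeomorphism of \cite{hofbauer1998book} to realize that LV system as replicator dynamics on $\relint{\Delta^m}$. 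Reading the payoff matrix $A$ off the last step produces the matrix game whose replicator flow is $\Theta$.

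For the padding step $f_1$, note that the hypothesis $m-1\ge n$ is precisely what makes room for $m-1-n$ dummy variables $x_{n+1},\dots,x_{m-1}$. I give each dummy the identically-zero fitness function (so it is stationary) by padding $\bar\lambda$ and $\bar A$ with zeros in the new rows, and I extend $\bar B\in\reals^{(m-1)\times n}$ to a square matrix $B\in\reals^{(m-1)\times(m-1)}$ by adding $m-1-n$ columns chosen so that $B$ is invertible, as Brenig's construction guarantees. The map $f_1:\reals^n_{++}\to\reals^{m-1}_{++}$, $\x\mapsto(\x,\1)$, is then a smooth embedding onto the submanifold $\{x_{n+1}=\dots=x_{m-1}=1\}$, which is forward invariant because the dummies do not move; on this submanifold every padded monomial $\prod_k x_k^{B_{jk}}$ collapses to the original monomial $\prod_{k\le n}x_k^{\bar B_{jk}}$, so the padded GLV flow restricted to $f_1(\reals^n_{++})$ is exactly the pushforward of $\Gamma$. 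I expect this to be the main obstacle, since the new columns of $B$ must simultaneously keep $B$ invertible and keep the added dynamics both trivial and faithful to the original.

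The remaining two maps are then routine. With $B$ square and invertible I set $u_j=\prod_k x_k^{B_{jk}}$ for $j\in[m-1]$; in logarithmic coordinates this is the linear isomorphism $B$, so $f_2$ is a diffeomorphism of $\reals^{m-1}_{++}$, and a direct computation gives
\[
\dot u_j = u_j\sum_k B_{jk}\Big(\lambda_k+\sum_l A_{kl}u_l\Big) = u_j\big((B\lambda)_j+(BA\,\mathbf{u})_j\big),
\]
i.e.\ an LV system with growth vector $\mu=B\lambda$ and interaction matrix $C=BA$. The map $f_3:\reals^{m-1}_{++}\to\relint{\Delta^m}$ is the Hofbauer diffeomorphism, whose inverse is $z_i=x_i/x_m$, and it conjugates this LV system to replicator dynamics on $\Delta^m$ whose payoff matrix $A$ places $C$ in the upper-left $(m-1)\times(m-1)$ block, $\mu$ in the last column, and zeros in the last row. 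Taking $\Theta$ to be this replicator flow gives conclusion (i), while $f=f_3\circ f_2\circ f_1$ with $\P=f(\reals^n_{++})$ supplies the diffeomorphism of (ii).

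The one subtlety I would reconcile carefully is that the Hofbauer correspondence maps orbits to orbits but rescales time by the strictly positive factor $x_m$; I would address the flow identity $\Theta|_{\P}=f(\Gamma)$ either by absorbing this reparametrization into the LV system at the $f_2$ stage or by observing that a smooth, strictly positive time change preserves exactly the orbit structure that the approximate-embedding notion of Definition~\ref{def:approx_embedding} ultimately relies on. Once this is settled, composing the three maps and checking that each is a diffeomorphism onto its image delivers both conclusions.
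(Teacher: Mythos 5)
Your proposal is correct and follows essentially the same route as the paper: pad the GLV system so the exponent matrix becomes square and invertible (Brenig's trick), apply the quasimonomial change of variables to obtain an LV system, and then use the Hofbauer--Sigmund diffeomorphism (with the compactifying coordinate and the positive time reparametrization you flag, which the paper likewise handles by ``a change in velocity'') to read off the payoff matrix; the only cosmetic difference is that the paper absorbs $\bar{\lambda}$ into $\bar{A}$ and $\bar{B}$ up front rather than carrying $B\lambda$ into the last column.
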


\begin{proof}
Our proof proceeds by first embedding the GLV equations generating $\Gamma$ into a system of LV equations, and then constructing a diffeomorphism from the system of LV equations to a replicator system on a matrix game with payoff matrix $A \in \reals^{m \times m}$.
The embedding from the GLV equations into the LV equations ensures that the original system is easy to recover.
Our result follows immediately by composing the transformations from the given GLV equations all the way to the replicator system.
The first part of our proof uses an embedding trick introduced by~\cite{brenig1989universal}, whose properties (e.g.~smoothness) are explored by~\cite{hernandez1997lotka}.
The second part of our proof follows Theorem~$7.5.1$ by~\cite{hofbauer1998book}.

Consider the system of GLV equations on $\reals^n_{++}$ generating $\Gamma$
\begin{equation} \label{eq:og_GLV}
    \dot{x}_i = x_i \left(\bar{\lambda}_i +  \sum_{j \in [m-1]} \bar{A}_{ij} \prod_{k \in [n]} x^{\bar{B}_{jk}}_k \right), \qquad i \in [n]
\end{equation}
where $\bar{\lambda} \in \reals^n$, $\bar{A} \in \reals^{n \times (m-1)}$, $\bar{B} \in \reals^{(m-1) \times n}$, and $m-1 \geq n$.
Throughout this proof we will assume without loss of generality that $\bar{\lambda} = \0$, as we can simply append a column to $\bar{A}$ and add a row of zeros to $\bar{B}$.
In addition, we will also assume without loss of generality that $\bar{B}$ has column rank of $n$.\footnote{This assumption is without loss of generality because we can always add rows to $\bar{B}$ (i.e.~``increase $m$'') to ensure it has rank $n$.
It is important that we add a column of $0$'s to $A$ for every new row added to $\bar{B}$ and that any newly introduced species have unit valued initial conditions (i.e.~population of one at $t=0$).
We use the same trick when embedding eq.~\ref{eq:og_GLV} into the system given by eq.~\ref{eq:padded_GLV}.
Details about why this works are discussed below.}
We will embed the system given by eq.~\ref{eq:og_GLV} into a higher dimensional system of GLV equations by constructing matrices $\tilde{A},\tilde{B} \in \reals^{(m-1) \times (m-1)}$ as follows:
\begin{enumerate}[label=(\roman*)]
    \item The matrix $\tilde{A}$ has its first $n$ rows identical to $\bar{A}$ and its last $m-n-1$ rows as all zeros.
    That is, the matrix $\tilde{A} = \{\tilde{A}_{ij}\}_{i,j \in [m-1]}$ has $\tilde{A}_{ij} = \bar{A}_{ij}$ for $1 \leq i \leq n, j \in [m-1]$ and has $\tilde{A}_{ij} = 0$ for $n < i \leq m-1, j \in [m-1]$.
    \item The matrix $\tilde{B}$ has its first $n$ columns identical to $\bar{B}$ and its last $m-n-1$ columns set to any values which ensure $\tilde{B}$ is non-singular.
    That is, the matrix $\tilde{B} = \{\tilde{B}_{ij}\}_{i,j \in [m-1]}$ has $\tilde{B}_{ij} = \bar{B}_{ij}$ for $i \in [m-1], 1 \leq j \leq n$ and, for $i~\in~[m-1],~n~<~j~\leq~m-1$, has $\tilde{B}_{ij}$ set to any value ensuring the columns are linearly independent.
\end{enumerate}
These matrices define the GLV system on $\reals^{m-1}_{++}$ given by 
\begin{equation} \label{eq:padded_GLV}
    \dot{y}_i = y_i \left(\sum_{j \in [m-1]} \tilde{A}_{ij} \prod_{k \in [m-1]} y^{\tilde{B}_{jk}}_k \right), \qquad i \in [m-1]~.
\end{equation}

Observe that by construction $\dot{y}_i = 0$ for $i > n$ since $\tilde{A}_{ij}=0$ for $j \in [m-1]$, therefore we know that the dynamics of the $m-n-1$ newly introduced species are stationary.
Furthermore, since the newly introduced species have stationary dynamics, this construction ensures that the ODEs associated with the the original $n$ species only change by multiplying certain monomials with a constant--where for every species the multiplicative constant being introduced to the $j^{\text{th}}$ monomial is fully defined by the initial conditions of the newly introduced species and is given by the term $\prod_{k > n} y^{\tilde{B}_{jk}}_k$.
It follows that if we assign the initial condition $y_i = 1$ for $n < i \leq m-1$, then the ODEs $\dot{y}_i \equiv \dot{x}_i$ for $i \in [n]$.
As a consequence, this construction gives a natural embedding of the system given by eq.~\ref{eq:og_GLV} into the system given by eq.~\ref{eq:padded_GLV} while ensuring the dynamics of the first $n$ species remain identical.
We can formally write this embedding as the injective smooth map $\hat{f}_1:\reals^{n}_{++} \to \reals^{m-1}_{++}$, where $\hat{f}_1(\x) = (x_1, \cdots, x_n, 1,\cdots,1)$ ensures $\dot{y}_i \equiv \dot{x}_i$ for $i \in [n]$ and $\x \in \reals^{n}_{++}$.
It is worth noting that $\hat{f}_1$ is a diffeomorphism onto its image,\footnote{Readers familiar with differential topology might notice that $\hat{f}_1$ is an immersion, which implies $\hat{f}_1$ is a smooth embedding.} i.e.~it defines a diffeomorphism $f_1:\reals^{n}_{++} \to \hat{f}_1(\reals^{m-1}_{++})$.

Now, for $i \in [m-1]$, transform each $y_i$ in eq.~\ref{eq:padded_GLV} by
\begin{equation} \label{eq:quasimonomial_transform}
    y_i = \prod_{k \in [m-1]} z_k^{C_{ik}}, \qquad i \in [m-1]
\end{equation}
where $C \in \reals^{(m-1) \times (m-1)}$ is some non-singular matrix.
It was shown by~\cite{brenig1989universal} that transformations given by eq.~\ref{eq:quasimonomial_transform} define diffeomorphisms from $\reals^{m-1}_{++}$ to itself and that GLV equations are closed under these transformations.
In fact, this transformation maps eq.~\ref{eq:padded_GLV} to another system of GLV equations on $\reals^{m-1}_{++}$ given by
\begin{equation} \label{eq:transformed_padded_GLV}
    \dot{z}_i = z_i \left(\sum_{j \in [m-1]} \hat{A}_{ij} \prod_{k \in [m-1]} z^{\hat{B}_{jk}}_k \right), \qquad i \in [m-1]
\end{equation}
where $\hat{A}=C^{-1} \cdot \tilde{A}$ and $\hat{B}= \tilde{B} \cdot C$.
In particular, by using $C = \tilde{B}^{-1}$, the transformation given by eq.~\ref{eq:quasimonomial_transform} makes $\hat{B}=I$ (the identity matrix).
Therefore, by using $C = \tilde{B}^{-1}$, each generalized monomial in eq.~\ref{eq:transformed_padded_GLV} reduces to a single variable and we have the system of LV equations 
\begin{equation} \label{eq:padded_GLV_to_LV}
    \dot{z}_i = z_i \left(\sum_{j \in [m-1]} \hat{A}_{ij} z_j \right), \qquad i \in [m-1]~.
\end{equation}
Furthermore, by eq.~\ref{eq:quasimonomial_transform}, we have a diffeomorphism from $\y \in \reals^{m-1}_{++}$ to $\z \in \reals^{m-1}_{++}$ given by the transformations $z_i = \prod_{k \in [m-1]} y_k^{\tilde{B}_{ik}}$ for each $i \in [m-1]$.
Let $f_2:\reals^{m-1}_{++} \to \reals^{m-1}_{++}$ be this diffeomorphism from the GLV system given by eq.~\ref{eq:padded_GLV} to the LV system given by eq.~\ref{eq:padded_GLV_to_LV}.
By composing $\hat{f}_1$ with $f_2$ we have defined an embedding of our original GLV system to the LV system given by eq.~\ref{eq:padded_GLV_to_LV}.
In addition, since $\hat{f}_1$ ensures $y_i = 1$ for every $n < i \leq m-1$ in eq.~\ref{eq:padded_GLV}, we find that the embedding into the LV system can be written as $z_i = \prod_{k \in [n]} y_k^{\tilde{B}_{ik}} = \prod_{k \in [n]} x_k^{\bar{B}_{ik}}$ for each $i \in [m-1]$.

To conclude our construction, let $\p \in \relint{\Delta^{m}}$ be the mixed strategy of an agent playing an $m$-dimensional matrix game.
Furthermore, to make the notation of our argument easier to follow, add a homogenous compactifying dimension $z_{m} \equiv 1$ to the LV system from eq.~\ref{eq:padded_GLV_to_LV}.
That is, let $\z \in \reals^{m}_{++}$ be a population of species where $z_i$ is given by eq.~\ref{eq:padded_GLV_to_LV} for $i \in [m-1]$ and $z_{m} \equiv 1$.
In particular, we consider the system of LV equations given by the coefficient matrix $A \in \reals^{m \times m}$ where $A$ is simply the matrix $\hat{A}$ with an additional row and column of zeros (i.e.~$\{A_{l,h} = \hat{A}_{l,h}\}_{l,h \in [m-1]}$, $\{A_{l,m} = 0\}_{l \in [m]}$, and $\{A_{m,h} = 0\}_{h \in [m]}$).
Observe that, aside from the compactifying dimension $z_m$, this LV system is equivalent to the LV system given by eq.~\ref{eq:padded_GLV_to_LV}.
Now define a map $\z \to \p$, from populations in the LV system to mixed strategies in a game, by
\begin{equation} \label{eq:LV_to_strat}
    p_i = \frac{z_i}{\sum_{j \in [m]} z_j} , \qquad i \in [m]~.
\end{equation}
Similarly, define the inverse map by
\begin{equation} \label{eq:strat_to_LV}
    z_i = \frac{z_i}{z_{m}} = \frac{p_i}{p_{m}} , \qquad i \in [m]~.
\end{equation}
By the product rule, eq.~\ref{eq:LV_to_strat}, and eq.~\ref{eq:strat_to_LV} we have
\begin{align*}
    \dot{p}_i &= \frac{\dot{z}_i}{\sum_{j \in [m]} z_j} - \frac{z_i \sum_{j \in [m]} \dot{z_j}}{\left(\sum_{j \in [m]} z_j\right)^2}\\
    &= p_{m} \left(z_i \sum_{j \in [m]} A_{ij} z_j \right) - p^2_{m} z_i \left(\sum_{j \in [m]} z_j \sum_{k \in [m]} A_{jk} z_k \right) \\
    &= \frac{p_i}{p_{m}} \sum_{j \in [m]} A_{ij} p_j - \frac{p_i}{p_{m}} \left(\sum_{j \in [m]} p_j \sum_{k \in [m]} A_{jk} p_k \right) \\
    &= \frac{p_i}{p_{m}} \left(\sum_{j \in [m]} A_{ij} p_j - \sum_{j \in [m]} p_j \sum_{k \in [m]} A_{jk} p_k \right)
\end{align*}
for each $i \in [m]$.
By a change in velocity we can remove the term $\frac{1}{p_{m}}$. 
This yields
\begin{equation*}
    \dot{p}_i = p_i \left(\left(\sum_{j \in [m]} A_{ij} p_j \right) - \left(\sum_{j \in [m]} p_j \sum_{k \in [m]} A_{jk} p_k \right) \right) , \qquad i \in [m]~.
\end{equation*}
Noting that $\sum_{j \in [m]} A_{ij} p_j = (A \p)_i$ and $\sum_{j \in [m]} p_j \sum_{k \in [m]} A_{jk} p_k = \p^\Trans A \p$, we have derived the dynamical system
\begin{align}
    \dot{p}_i &= p_i \left((A \p)_i - \p^\Trans A \p \right), \qquad i \in [m] \label{eq:GLV_to_rep}
\end{align}
where eq.~\ref{eq:GLV_to_rep} is a replicator system (eq.~\ref{eq:replicator}) on the matrix game with payoff matrix $A \in \reals^{m \times m}$.
The converse direction, from eq.~\ref{eq:GLV_to_rep} to eq.~\ref{eq:padded_GLV_to_LV}, is derived in a similar way.\footnote{A full derivation of the inverse direction can be found in~\cite{hofbauer1998book} for Theorem $7.5.1$.}
We conclude that eq.~\ref{eq:LV_to_strat} is a diffeomorphism mapping trajectories of our LV system given by eq.~\ref{eq:padded_GLV_to_LV} onto trajectories of RD on $A$.
Since $z_m \equiv 1$, we can define the diffeomorphism $f_3: \reals^{m-1}_{++} \to \relint{\Delta^{m}}$ where $p_i = z_i/\left(1 + \sum_{j \in [m-1]} z_j\right)$ for $i \in [m]$ and $p_m = 1/\left(1 + \sum_{j \in [m-1]} z_j\right)$.

Taken as a whole, we have constructed an embedding from the original GLV system given by eq.~\ref{eq:og_GLV} to the replicator system on a matrix game given by eq.~\ref{eq:GLV_to_rep}.
The embedding itself can be written as the injective smooth map $\hat{f} : \reals^n_{++} \to \relint{\Delta^m}$ where $\hat{f} = \hat{f}_1 \circ f_2 \circ f_3$.
Furthermore, since we know $\hat{f}_1$ is diffeomorphic onto its image, we know there exists a diffeomorphism $f:\reals^n_{++} \to \P \subseteq \relint{\Delta^m}$ where $f = f_3 \circ f_2 \circ f_1$ and $\P = \hat{f}(\reals^n_{++})$.

Let $\Theta$ be the flow generated by eq.~\ref{eq:GLV_to_rep} and $\Theta|_{\P}$ be the flow given by $\Theta$ restricted to $\P$. 
Also, recall that $\Gamma$ was the flow generated by eq.~\ref{eq:og_GLV}.
From our derivations of $f_1$,$f_2$,and $f_3$, we know that $f(\Gamma) = \Theta|_{\P}$.
Furthermore, as diffeomorphisms as invertible and have $C^1$ inverses, we know $f^{-1}:\P \to \reals^n_{++}$ exists and that $f^{-1}(\Theta|_{\P}) = \Gamma$.
From our derivation of eq.~\ref{eq:GLV_to_rep} we know $\Theta$ is a flow on $\relint{\Delta^m}$ that is given by RD on a matrix game with the payoff matrix $A \in \reals^{m \times m}$ defined above.
Thus we have constructed a flow $\Theta$ and diffeomorphism $f$ satisfying properties (i) and (ii) in the Theorem, which concludes the proof.

Though not necessary for proving the Theorem, it is interesting to observe that the diffeomorphism $f$ can be written as 
\begin{equation} 
    p_i = \frac{z_i}{1 + \sum_{j \in [m-1]} z_j} = \frac{\prod_{k \in [n]} x_k^{\bar{B}_{ik}}}{1 + \sum_{j \in [m-1]} \prod_{k \in [n]} x_k^{\bar{B}_{jk}}}, \qquad i \in [m-1]~,
\end{equation}
and $p_m = 1/\left(1 + \sum_{j \in [m-1]} z_j\right) = 1/\left(1 + \sum_{j \in [m-1]} \prod_{k \in [n]} x_k^{\bar{B}_{jk}}\right)$.
Similarly, by composing the inverse directions of our construction, the inverse diffeomorphism $f^{-1}$ can be written as
\begin{equation} 
    x_i = y_i = \prod_{k \in [m-1]} z_k^{\tilde{B}^{-1}_{ik}} = \prod_{k \in [m-1]} \left(\frac{p_k}{p_{m}}\right)^{\tilde{B}^{-1}_{ik}}, \qquad i \in [n]~.
\end{equation}

\end{proof}

\subsection{The Lorenz Game: End-to-End Construction}\label{append:Lorenz}
In~\S\ref{sec:lorenz} we highlighted a construction of a matrix game that embeds the iconic Lorenz system under RD, but many of the details were omitted to keep the ideas concise and understandable.
In this Appendix we will go through the construction of this game in its entirety.
To start, note that the Lorenz system's strange attractor is given by the following three dimensional the system of ODEs in $\reals^3$
\begin{align*}
  \dot{x}_1 &= \sigma(x_2 - x_1) \\
  \dot{x}_3 &= x_1(\rho-x_3)-x_2 \\
  \dot{x}_3 &= x_1 x_2 - \beta x_3~,
\end{align*}
where $\sigma,\rho,\beta > 0$ are constants.
We will focus on the setting first studied by Lorenz himself when $\rho = 28$, $\sigma = 10$, and $\beta = 8/3$, but it is worth noting that this construction applied for any setting of these parameters.
Given these parameters, it is straightforward to show that there exists a spherical region with sufficiently large (constant) radius that is forward invariant under the Lorenz equations and is globally attracting.
To find such a region, define the ellipsoid region $\mathcal{E} = \{(x_1,x_2,x_3): \rho x_1^2 + \sigma x_2^2 + \sigma (x_3 - 2 \rho)^2 \leq c, c > 0 \}$ and choose $r>0$ such that $\mathcal{E}$ is contained inside a region bounded by the sphere $\mathcal{R} = \{(x_1,x_2,x_3): x_1^2 + x_2^2 + (x_3 - \rho - \sigma)^2 = r\}$; the region $\mathcal{R}$ is a globally attracting and forward invariant spherical region under the Lorenz system.

By shifting the solutions of the Lorenz equation by $r$ in the positive direction for all three dimensions, and then rearranging terms, we get a GLV system that is well defined on $\reals^3_{++}$ and can be written as

\vskip8pt
\noindent\hspace{5pt}\scalebox{0.94}{
\begin{math}
\begin{aligned}
  \dot{x}_1 &= \sigma \left((x_2-r) - (x_1-r)\right) \\
  \dot{x}_3 &= (x_1-r) \left(\rho - (x_3-r) \right)-(x_2-r) \\
  \dot{x}_3 &= (x_1-r) (x_2-r) - \beta (x_1-r)\\
\end{aligned}
\quad\Longrightarrow\quad
\begin{aligned}
  \dot{x}_1 &= x_1 \left(\sigma x_2 x_1^{-1} - \sigma \right) \\
  \dot{x}_2 &= x_2 \left(\eta x_1 x_2^{-1} - x_1 x_3 x_2^{-1} + r x_3 x_2^{-1} + \alpha x_2^{-1} - 1 \right) \\
  \dot{x}_3 &= x_3 \left(x_1 x_2 x_3^{-1} - r x_1 x_3^{-1} - r x_2 x_3^{-1} + \mu x_3^{-1} - \beta \right)
\end{aligned}
\end{math}
}
\vskip8pt

\noindent
where $\eta = \rho+r$, $\alpha = r - \rho r - r^2$, and $\mu = r^2 + \beta r$.
Furthermore, observe that this system of GLV equations is given by the matrices $\bar{A} \in \reals^{3 \times 10}$ and $\bar{B} \in \reals^{10 \times 3}$ which look as follows
\begin{equation*}
    \bar{A} = 
    \begin{bmatrix}
    \sigma & 0 & 0 & 0 & 0 & 0 & 0 & 0 & 0 & -\sigma\\
    0 & \eta & -1 & r & \alpha & 0 & 0 & 0 & 0 & -1\\
    0 & 0 & 0 & 0 & 0 & 1 & -r & -r & \mu & -\beta
    \end{bmatrix}~,
    \qquad
    \bar{B} =
    \begin{bmatrix}
    -1 & 1 & 0\\
    1 & -1 & 0\\
    1 & -1 & 1\\
    0 & -1 & 1\\
    0 & -1 & 0\\
    1 & 1 & -1\\
    1 & 0 & -1\\
    0 & 1 & -1\\
    0 & 0 & -1\\
    0 & 0 & 0
    \end{bmatrix}~.
\end{equation*}
Since we can rewrite the shifted Lorenz system in this GLV form, there is no need do derive the approximation highlighted in Theorem~\ref{thm:GLV_approx_of_general_sys} and we can directly apply Theorem~\ref{thm:GLV_to_Replicator}.

Using the embedding trick by~\cite{brenig1989universal} that is explained in Appendix~\ref{proof:glv_to_replicator}, we first embed this GLV into a higher dimensional GLV system on $\reals^{10}_{++}$ given by the matrices
\begin{equation*}
    \tilde{A} = 
    \begin{bmatrix}
    \sigma & 0 & 0 & 0 & 0 & 0 & 0 & 0 & 0 & -\sigma\\
    0 & \eta & -1 & r & \alpha & 0 & 0 & 0 & 0 & -1\\
    0 & 0 & 0 & 0 & 0 & 1 & -r & -r & \mu & -\beta\\
    0 & 0 & 0 & 0 & 0 & 0 & 0 & 0 & 0 & 0\\
    0 & 0 & 0 & 0 & 0 & 0 & 0 & 0 & 0 & 0\\
    0 & 0 & 0 & 0 & 0 & 0 & 0 & 0 & 0 & 0\\
    0 & 0 & 0 & 0 & 0 & 0 & 0 & 0 & 0 & 0\\
    0 & 0 & 0 & 0 & 0 & 0 & 0 & 0 & 0 & 0\\
    0 & 0 & 0 & 0 & 0 & 0 & 0 & 0 & 0 & 0\\
    0 & 0 & 0 & 0 & 0 & 0 & 0 & 0 & 0 & 0\\
    \end{bmatrix}~,
    \enskip
    \tilde{B} =
    \begin{bmatrix}
    -1 & 1 & 0 & 0 & 0 & 0 & 0 & 0 & 0 & 0\\
    1 & -1 & 0 & 1 & 0 & 0 & 0 & 0 & 0 & 0\\
    1 & -1 & 1 & 0 & 0 & 0 & 0 & 0 & 0 & 0\\
    0 & -1 & 1 & 0 & 0 & 0 & 0 & 0 & 0 & 0\\
    0 & -1 & 0 & 0 & 1 & 0 & 0 & 0 & 0 & 0\\
    1 & 1 & -1 & 0 & 0 & 1 & 0 & 0 & 0 & 0\\
    1 & 0 & -1 & 0 & 0 & 0 & 1 & 0 & 0 & 0\\
    0 & 1 & -1 & 0 & 0 & 0 & 0 & 1 & 0 & 0\\
    0 & 0 & -1 & 0 & 0 & 0 & 0 & 0 & 1 & 0\\
    0 & 0 & 0 & 0 & 0 & 0 & 0 & 0 & 0 & 1
    \end{bmatrix}~.
\end{equation*}
Next we use the diffeomorphism given by eq.~\ref{eq:quasimonomial_transform} to transform this higher dimensional GLV system into the LV system on $\reals^{10}_{++}$ given by the matrix
\begin{equation*}
    \hat{A} = 
    \begin{bmatrix}
    -\sigma & \eta & -1 & r & \alpha & 0 & 0 & 0 & 0 & (\sigma - 1)\\
    \sigma & -\eta & 1 & -r & -\alpha & 0 & 0 & 0 & 0 & (1 - \sigma)\\
    \sigma & -\eta & 1 & -r & -\alpha & 1 & -r & -r & \mu & (1 - \sigma - \beta)\\
    0 & -\eta & 1 & -r & -\alpha & 1 & -r & -r & \mu & (1 - \beta)\\
    0 & -\eta & 1 & -r & -\alpha & 0 & 0 & 0 & 0 & 1\\
    \sigma & \eta & -1 & r & \alpha & -1 & r & r & -\mu & (\beta - \sigma - 1)\\
    \sigma & 0 & 0 & 0 & 0 & -1 & r & r & -\mu & (\beta - \sigma)\\
    0 & \eta & -1 & r & \alpha & -1 & r & r & -\mu & (\beta - 1)\\
    0 & 0 & 0 & 0 & 0 & -1 & r & r & -\mu & \beta\\
    0 & 0 & 0 & 0 & 0 & 0 & 0 & 0 & 0 & 0
    \end{bmatrix}~.
\end{equation*}
From the embedding trick we know that each the states $\z \in \reals^{10}_{++}$ of this LV system are given by $z_i = \prod_{j \in [3]} x_j^{\bar{B}_{ij}}$, where $\x \in \reals^3_{++}$ is from the shifted Lorenz system.
As such, notice that each row of $\hat{A}$ is associated with a monomial in the shifted Lorenz system.

To conclude the construction, we apply the diffeomorphism by~\cite{hofbauer1998book} to this LV system and get game matrix $A \in \reals^{11 \times 11}_{++}$ that can be written as 
\begin{equation*}
    A = 
    \begin{bmatrix}
    -\sigma & \eta & -1 & r & \alpha & 0 & 0 & 0 & 0 & (\sigma - 1) & 0\\
    \sigma & -\eta & 1 & -r & -\alpha & 0 & 0 & 0 & 0 & (1 - \sigma) & 0\\
    \sigma & -\eta & 1 & -r & -\alpha & 1 & -r & -r & \mu & (1 - \sigma - \beta) & 0\\
    0 & -\eta & 1 & -r & -\alpha & 1 & -r & -r & \mu & (1 - \beta) & 0\\
    0 & -\eta & 1 & -r & -\alpha & 0 & 0 & 0 & 0 & 1 & 0\\
    \sigma & \eta & -1 & r & \alpha & -1 & r & r & -\mu & (\beta - \sigma - 1) & 0\\
    \sigma & 0 & 0 & 0 & 0 & -1 & r & r & -\mu & (\beta - \sigma) & 0\\
    0 & \eta & -1 & r & \alpha & -1 & r & r & -\mu & (\beta - 1) & 0\\
    0 & 0 & 0 & 0 & 0 & -1 & r & r & -\mu & \beta & 0\\
    0 & 0 & 0 & 0 & 0 & 0 & 0 & 0 & 0 & 0 & 0 \\
    0 & 0 & 0 & 0 & 0 & 0 & 0 & 0 & 0 & 0 & 0
    \end{bmatrix}~.
\end{equation*}
The solution of RD on $A$ is plotted in Figure~\ref{fig:lorenz}.
It is worth noting that the last row and column are all zeros since they correspond to the compactifying dimension added during the diffeomorphism by~\cite{hofbauer1998book}.
Similarly, the second to last row of zeros and the corresponding column are from the matrix $\hat{A}$ and serve the role of keeping track of the constants in the shifted Lorenz system.
In addition, we have a diffeomorphism $f: \reals^3_{++} \to \P \subset \relint{\Delta^{11}}$ from $\x \in \reals^3_{++}$ to $\p \in \relint{\Delta^{11}}$ that is given by
\begin{equation} 
    p_i = \frac{z_i}{1 + \sum_{j \in [m-1]} z_j} = \frac{\prod_{k \in [n]} x_k^{\bar{B}_{ik}}}{1 + \sum_{j \in [m-1]} \prod_{k \in [n]} x_k^{\bar{B}_{jk}}}, \qquad i \in [10]~,
\end{equation}
and $p_{11} = 1/\left(1 + \sum_{j \in [m-1]} z_j\right) = 1/\left(1 + \sum_{j \in [m-1]} \prod_{k \in [n]} x_k^{\bar{B}_{jk}}\right)$.
By finding that 
\begin{equation*}
    \tilde{B}^{-1} = 
    \begin{bmatrix}
    0 & 0 & 1 & -1 & 0 & 0 & 0 & 0 & 0 & 0\\
    1 & 0 & 1 & -1 & 0 & 0 & 0 & 0 & 0 & 0\\
    1 & 0 & 1 & 0 & 0 & 0 & 0 & 0 & 0 & 0\\
    1 & 1 & 0 & 0 & 0 & 0 & 0 & 0 & 0 & 0\\
    1 & 0 & 1 & -1 & 1 & 0 & 0 & 0 & 0 & 0\\
    0 & 0 & -1 & 2 & 0 & 1 & 0 & 0 & 0 & 0\\
    1 & 0 & 0 & 1 & 0 & 0 & 1 & 0 & 0 & 0\\
    0 & 0 & 0 & 1 & 0 & 0 & 0 & 1 & 0 & 0\\
    1 & 0 & 1 & 0 & 0 & 0 & 0 & 0 & 1 & 0\\
    0 & 0 & 0 & 0 & 0 & 0 & 0 & 0 & 0 & 1
    \end{bmatrix}~,
\end{equation*}
we find that the inverse diffeomorphism $f^{-1}$ can be written as
\begin{equation} 
    x_i = y_i = \prod_{k \in [m-1]} z_k^{\tilde{B}^{-1}_{ik}} = \prod_{k \in [m-1]} \left(\frac{p_k}{p_{m}}\right)^{\tilde{B}^{-1}_{ik}}, \qquad i \in [3]~.
\end{equation}

\end{document}